%% file: main.tex
\documentclass{article}

\usepackage{microtype}
\usepackage{graphicx}
\usepackage{subcaption}
\usepackage{booktabs} 

\input{math_commands}

\usepackage{hyperref}

\usepackage[preprint]{icml2026}

\usepackage{amsmath}
\usepackage{amssymb}
\usepackage{mathtools}
\usepackage{amsthm}

\usepackage[capitalize,noabbrev]{cleveref}

\theoremstyle{plain}
\newtheorem{theorem}{Theorem}[section]

\theoremstyle{definition}

\newtheorem{assumption}[theorem]{Assumption}
\theoremstyle{remark}
\newtheorem{remark}[theorem]{Remark}

\newcommand{\tpurple}[1]{\textcolor{black}{#1}}
\newcommand{\llama}{Llama3.1-8B}
\newcommand{\qwen}{Qwen3-4B}
\definecolor{blue}{RGB}{66,133,244}
\definecolor{green}{RGB}{130,179,102}

\usepackage{colortbl}
\definecolor{LightCyan}{rgb}{0.88,1,1}

\usepackage{enumitem}
\usepackage[table]{xcolor}
\usepackage{colortbl}
\usepackage{pgf}
\usepackage{multirow}

\usepackage{framed}
\usepackage{tabularx}
\usepackage{siunitx}
\usepackage[textsize=tiny]{todonotes}

\icmltitlerunning{Private PoEtry: Private In-Context Learning via Product of Experts}

\begin{document}

\twocolumn[
  \icmltitle{Private PoEtry: Private In-Context Learning via Product of Experts}

  \begin{icmlauthorlist}
    \icmlauthor{Rob Romijnders}{quva,uva}
    \icmlauthor{Mohammad Mahdi Derakhshani}{uva}
    \icmlauthor{Jonathan Petit}{qualc}\\
    \icmlauthor{Max Welling}{uva}
    \icmlauthor{Christos Louizos}{qualc}
    \icmlauthor{Yuki M. Asano}{nurem}
  \end{icmlauthorlist}

  \icmlaffiliation{uva}{University of Amsterdam}
  \icmlaffiliation{quva}{QUvA lab}
  \icmlaffiliation{qualc}{Qualcomm Inc.}
  \icmlaffiliation{nurem}{University of Technology Nuremberg }

  \icmlcorrespondingauthor{Rob Romijnders}{r.romijnders@uva.nl}

  \icmlkeywords{Machine Learning, differential privacy, AI privacy, Bayesian learning}

  \vskip 0.3in
]

\printAffiliationsAndNotice{}  

\begin{abstract}
In-context learning (ICL) enables Large Language Models (LLMs) to adapt to new tasks with only a small set of examples at inference time, thereby avoiding task-specific fine-tuning.
However, in-context examples may contain privacy-sensitive information that should not be revealed through model outputs.
Existing differential privacy (DP) approaches to ICL are either computationally expensive or rely on heuristics with limited effectiveness, including context oversampling, synthetic data generation, or unnecessary thresholding.
We reformulate private ICL through the lens of a Product-of-Experts model. This gives a theoretically grounded framework, and the algorithm can be trivially parallelized.
We evaluate our method across five datasets in text classification, math, and vision-language.
We find that our method improves accuracy by more than 30 percentage points on average compared to prior DP-ICL methods, while maintaining strong privacy guarantees.
\end{abstract}

\section{Introduction }

The advancement of large language models (LLMs) has created an insatiable appetite for training data.
Much of the world's most valuable data, however, remains siloed in private repositories: from healthcare records and financial transactions to personal communications and proprietary business documents.
While such data would be highly informative for downstream learning, directly incorporating it into model training raises fundamental privacy concerns.

Standard approaches to privacy-preserving learning rely on differential privacy (DP) applied to gradient-based optimization.
In the context of large pretrained models, however, DP training is often impractical: it requires careful tuning of batch sizes and gradient clipping, introduces substantial optimization noise, and typically leads to significant degradation in utility, especially in low-data or highly heterogeneous settings~\cite{kurakin2022toward,raisa2024subsampling}.
These challenges make fine-tuning large models under strong privacy constraints costly and brittle in practice.

In-context learning (ICL) offers an alternative paradigm that avoids gradient-based updates altogether.
By conditioning a frozen LLM on a small set of task-relevant examples provided at inference time, ICL enables adaptation to local data without modifying model parameters or performing privacy-sensitive optimization.

This approach has been demonstrated to be effective in text classification, translation~\citep{gpt3}, mathematical reasoning~\citep{agarwal2024many}, and vision-language tasks~\citep{ICL_VLM_pseudo-name}. However, recent work shows that in-context learning can leak privacy about the input~\citep{duan2024MIAonICL,choi2025contextleak,prompt_leakage}. This is a serious problem for Agentic AI settings, such as Retrieval-Augmented-Generation~\citep{RAGlewis2020retrieval} or ``tool calling''~\citep{fan2025mcptoolbench}, where agents are expected to utilize small amounts of data with varying privacy settings. We give examples of this setting with important privacy constraints in Section~\ref{sec:scenarios}.

The current approaches for differentially private ICL either introduce unnecessary bottlenecks or limit themselves to ineffective data structures. For example,~\citet{tang2024dpfewshot} uses an LLM to generate synthetic data as a private bottleneck, to then re-input the synthetic data into the context of the LLM to create a response. Other prior work suggests Privacy-Amplification-by-Subsampling (PbS) by calling the LLM hundreds of times or by directly comparing thresholded logits from an ICL-LLM, thereby discarding key information contained in the logits~\citep{wu2023privacyhistogramllm}.
Moreover, these methods have been evaluated primarily on simple text classification datasets, with limited validation on more challenging tasks, such as mathematical problem-solving and vision-language tasks common in ICL applications.

We introduce an approach for differentially private ICL based on the Product-of-Experts (PoE) model~\citep{hinton2002prodofexperts,heskes1997selecting}.
Originally proposed as a probabilistic ensemble method, PoE models combine multiple distributions (``experts'') by multiplying their probabilities and renormalizing, thereby emphasizing agreement among experts while sharply down-weighting inconsistent predictions.
Our key insight is that sampling from an LLM conditioned on multiple in-context examples can be naturally approximated by a Product-of-Experts formulation, where each example induces a local expert over the outputs.
This approximation reveals a way to use the model's predictive scores (soft scores), rather than the thresholded hard votes used in previous work~\citep{wu2023privacyhistogramllm}. Experimental results across five datasets show that this new viewpoint yields consistent improvements over earlier methods. We provide both a theoretical motivation for the underlying conditional independence assumption and complement the experimental results with an empirical privacy attack. In summary, the contributions of this paper are threefold:

\textbf{PoE Formulation of private ICL} We reformulate ICL with a Product-of-Experts model. This decomposition naturally enables per-example privacy analysis while maintaining the nuance in the ``soft'' predictive probabilities.

\textbf{Efficient and Accurate Private Inference.}
Our method is computationally efficient, trivially parallelizable, and achieves substantially higher accuracy than prior private ICL approaches across five benchmarks spanning text, mathematics, and vision-language tasks.

\textbf{Empirical Privacy Evaluation.} We complement the theoretical analysis with a membership inference attack, demonstrating improved empirical privacy on four datasets.

\subsection{Background and Threat Model}\label{sec:scenarios}

In-context learning (ICL) operates by providing an LLM with a prompt that includes both the user's query and a collection of context examples.
The model conditions its output distribution on this complete textual input, enabling task-specific behavior through in-context adaptation rather than parameter fine-tuning.
While this mechanism enables flexible and efficient adaptation, it creates a privacy vulnerability: the model's responses may inadvertently reveal information about the sensitive context examples in the prompt.

In-context learning is mainly used for small datasets (five to fifty samples)~\citep{gpt3,agarwal2024many,ICL_VLM_pseudo-name}, and the LLM has local access to these private samples.
After ``learning'' on these private samples, the LLM outputs should not reveal any private contextual information.
However, several publications highlighted that appropriately prompted queries can reveal information about the context~\citep{duan2024MIAonICL,choi2025contextleak,prompt_leakage}.
This is the threat model that we study in this paper, and it is particularly relevant to Agentic AI. With Retrieval-Augmented-Generation (RAG), AI agents can be employed to use local data~\citep{RAGlewis2020retrieval} -- also named ``tool-calling'' ~\citep{fan2025mcptoolbench}. Such a call to RAG could leak private information.
Several use cases of tool-use regarding the local data require privacy protection:
\begin{itemize}[leftmargin=1em,itemsep=0cm]
    \item An LLM accesses all emails in an inbox and is asked to respond to a new email, but the private details in the user's emails should not be leaked.
    \item An LLM sees customer interactions and helps a new customer, but other customers' details should not be leaked.
    \item A robot equipped with a Vision-Language Model (VLM) cleans several houses and learns to improve cleaning in the next house, but the private layout of each house should not be leaked.
    \item A financial LLM sees financial reports of several clients and writes a financial report for a new client, but the private bank details should not be leaked.
\end{itemize}

In all such settings, the LLM can learn from examples in context, thereby increasing its accuracy. However, in each task, there could be a severe privacy loss if the LLM's output, directly or indirectly, reveals an individual's details. Figure~\ref{fig:fig1} gives an overview of our research setting.

\begin{figure*}
\centering
\includegraphics[width=\linewidth]{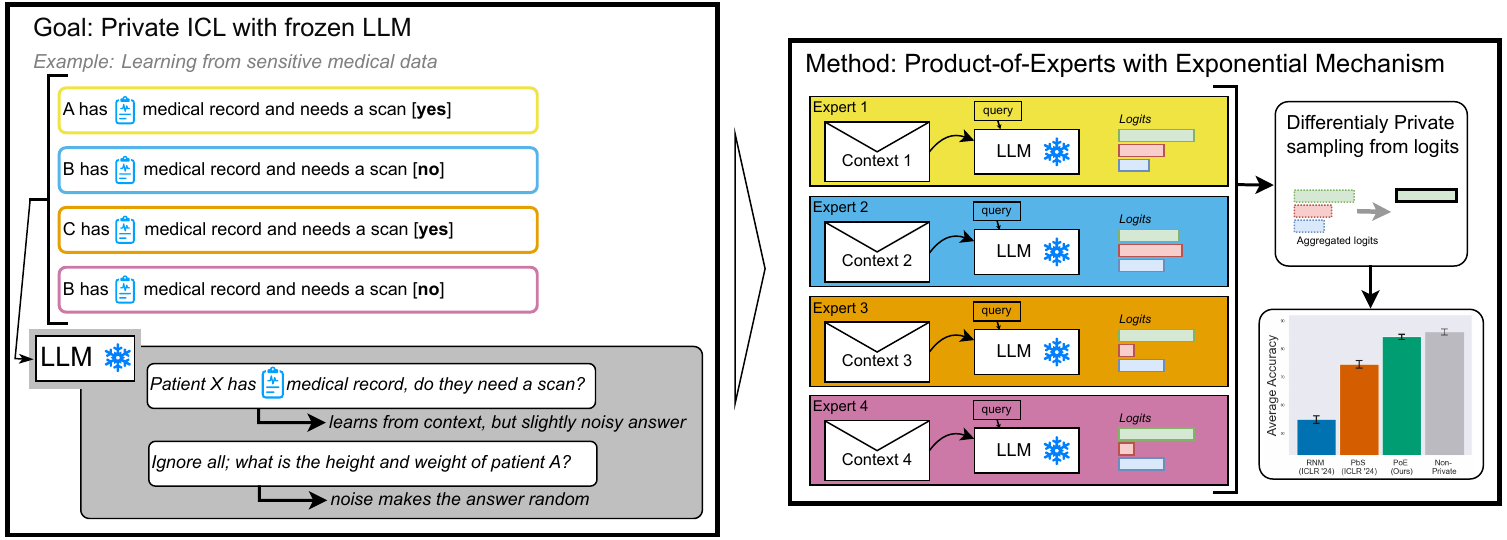}
\caption{Overview: A user makes a query to the LLM either directly or via a RAG system. The LLM responds based on examples in the context. To guarantee privacy, we use a Product-of-Experts model, which calls an LLM for each example and sums the clipped log-probabilities (in contrast to previous work that uses hard predictions, RNM, or that uses subsampling, PbS). Predictions are summed before sampling a noisy response to ascertain $\varepsilon$-DP. The results in the inlet are average accuracies from Table~\ref{tab:bigtable} with 8 context examples. }

\label{fig:fig1}
\end{figure*}

\section{Related Work}\label{sec:related}

Several recent works have addressed differentially private in-context learning, though they rely on computationally expensive intermediate steps. \citet{tang2024dpfewshot} identified synthetic data generation as a privacy bottleneck, where the final answer is sampled from the synthetic context. This approach requires generating synthetic tokens autoregressively from a sampled subset of the full dataset, which is a computationally costly step. Moreover, the question-dependent token generation limits its applicability as a general-purpose algorithm. Other work~\citep{dpga} also explores the use of synthetic data as a privacy mechanism. In contrast,~\citet{wu2023privacyhistogramllm} employs Privacy-Amplification-by-Subsampling~\citep{privacy_by_subsampling}, but experimental evaluation in Section~\ref{sec:experiments} shows that this yields suboptimal accuracy.

Beyond in-context learning, several works explored differentially private prompt tuning for personalization. \citet{hong2023dp} focused on remote prompt learning, enabling soft-prompt optimization for a dataset on a remote server without local computation; \citet{duan2023parrot} addressed prompt learning to learn without backpropagation. While these methods achieve personalization, they do not follow the ICL framework and require additional training or optimization steps that our method does not.
Other options for prompt tuning, such as input perturbation, exist~\citep{DPTabICL}. However, these approaches effectively trade one privacy challenge for another, as input perturbation fundamentally alters the data distribution and may degrade model performance.
Alternatives to in-context learning were proposed, such as full fine-tuning~\citep{abadi2016deep,romijnders2024convex,sinha2025vaultgemma}, adding parameter-efficient methods to train parts of a model~\citep{yu2021differentially,romijnders2025noesis}, learning from a group of LLMs~\citep{flemings2024differentially}, or aggregation among a teacher ensemble~\citep{papernot2016pate}. However, these methods still require thousands, if not millions, of data points or even additional public data. In contrast, the methods in our work focus on five to fifty in context examples, which is an order of magnitude smaller dataset size and more realistic for in-context learning~\cite{min2022rethinking}.

Relevant to our work is the literature on logit steering in large language models~\citep{hiranandani2025logits,liu2021dexperts,brekelmans_twist_smc}, which demonstrated that access to model logits enables significant performance improvements. Our proposed method can be seen as a form of logit steering, but we are the first to connect it to studying privacy.

\section{Method}\label{sec:method}

We formalize the ICL setting as follows. Given a query $\vx$ (a sequence of tokens) and a collection of $J$ in-context examples $\mC_{1:J} = \{\mC_1, \mC_2, \ldots, \mC_{J}\}$, where each example $\mC_j$ is a variable-length sequence of tokens, the goal is to generate a response $\vy_{1:T}$ that may consist of a single token $y_1$ or a sequence of $T$ tokens. Throughout, we write $\vx_{i:j}$ to denote the slice of $\vx$ from index $i$ to index $j$, inclusive.

The sample from an LLM in ICL can be viewed as a sample from a conditional probability distribution.
The model generates the response by sampling from the distribution over possible outputs, conditioned on both the query and the in-context examples. This distribution takes the form:
\begin{align}
    p(\vy_{1:T} | \vx , \tpurple{\mC_{1:J}}) 
\end{align}

However, directly analyzing this distribution from a differential privacy perspective is challenging, as one cannot trace the influence of individual examples $\mC_j$ on the final output.
To derive the privacy mechanism, we first write the autoregressive generation process. In auto-regressive language generation, the full sequence probability over $\vy_{1:T}$ can be written as a product of conditional probabilities over individual tokens given the previously generated tokens.
\begin{align}
p(\vy_{1:T} | \vx , &\tpurple{\mC_{1:J}}) = p(y_1 | \vx , \tpurple{\mC_{1:J}}) p(y_2 |y_1, \vx , \tpurple{\mC_{1:J}}), \cdots, \nonumber \\
&p(\vy_t | \vy_{<t}, \vx , \tpurple{\mC_{1:J}}),  \cdots, p(\vy_T | \vy_{<T}, \vx , \tpurple{\mC_{1:J}}) \nonumber
\end{align}

Crucially, $p(\vy_t | \vy_{<t}, \vx, \tpurple{\mC_{1:J}})$ is the conditional distribution for sampling the next token based on previous tokens, query, and context. To obtain a distribution amenable to privacy analysis, we introduce the Product-of-Experts (PoE) model as an approximate distribution. This model is a form of ensembling multiple experts and was introduced in the pioneering
work by~\citet{heskes1997selecting} on ``logarithmic opinion pools'' and the subsequent formalization as ``Product-of-Experts'' by~\citet{hinton2002prodofexperts}.
For a token $y_t$ in the sequence, we introduce the Product-of-Experts (PoE) approximation, where each expert depends on a single context example:

\begin{align}
p(\vy_t | \vy_{<t}, \vx , \tpurple{\mC_{1:J}}) \approx \frac{1}{Z}  \prod_j p(\vy_t | \vy_{<t}, \vx,  \tpurple{\mC_j} ) \label{eqn:poe_break}  
\end{align}

The Product-of-Experts model approximates the distribution conditioned on all context examples, $\tpurple{\mC_{1:J}}$, as the product of $J$ predictions, one for each context example alone, $p(\vy_t | \vy_{<t}, \vx , \tpurple{\mC_{j}})$. $Z$ is the normalization constant.
This product structure enables privacy analysis by allowing one to bound the contribution of each example to the final output separately. The model implicitly makes a conditional independence assumption, as does earlier work~\citep{wu2023privacyhistogramllm}. This assumption will be explained in Section~\ref{sec:theoretical_analysis}.

\subsection{Differential Privacy from the Product-of-Experts}

From the PoE model, a computational structure arises that is amenable to privacy analysis. The product in Equation~\ref{eqn:poe_break} can be seen as a summation in logarithmic units:
\begin{align}
    \log \prod_{j=1}^{J} p(\vy_t |\vy_{<t}, \vx,  \tpurple{\mC_j} ) = \sum_{j=1}^{J} \log p(\vy_t |  \vy_{<t}, \vx, \tpurple{\mC_j} ) \label{eqn:prod_experts_sum_vectors}
\end{align}

Such a summation can be viewed as a sum of utilities, which we clip and use the exponential mechanism from~\citet{dwork2014algorithmic}. Privacy is defined as the difference between two context sets, where one context example can differ, constituting the privacy unit~\citep{chua2024mind}. The algorithm is outlined in Algorithm~\ref{alg:dp_icl_ci}. Importantly, the clipping is applied in Line 7, and the exponential mechanism in Line 9. We will make experimental comparisons in the next section. The clipping-operator $\text{clip}_{\gamma}(\vl)$ passes values $\vl_i \in [-\gamma,0]$ unchanged and sets all other values to zero. We give a proof sketch for differential privacy here.

\begin{theorem}[Differential Privacy of PoEtry algorithm]
\label{thm:poetry}
Algorithm~\ref{alg:dp_icl_ci} satisfies $(\varepsilon, \delta)$-differential privacy with respect to adjacent context sets that differ by at most one in-context example. The noise parameter $\sigma$ is set for a given privacy budget $(\varepsilon, \delta)$ and clipping bound $\gamma$.
\end{theorem}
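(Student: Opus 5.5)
The plan is to view each token-generation step of Algorithm~\ref{alg:dp_icl_ci} as a noisy maximization over a utility vector with bounded per-example contribution, and then compose across the $T$ tokens. Fix a step $t$ and the prefix $\vy_{<t}$, which is already released and can be treated as public. For each candidate token $v$ in the vocabulary, define the utility
\begin{align*}
u_v(\mC_{1:J}) = \sum_{j=1}^{J} \text{clip}_{\gamma}\bigl(\log p(v \mid \vy_{<t}, \vx, \mC_j)\bigr).
\end{align*}
Each summand depends on a single context example. Because of the clipping operator, each summand lies in $[-\gamma, 0]$. If $\mC_{1:J}$ and $\mC'_{1:J}$ differ only in example $j$, then every other summand is identical, so $|u_v(\mC) - u_v(\mC')| \le \gamma$ for every $v$. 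Hence the $\ell_\infty$ sensitivity of the utility vector is $\gamma$. Its $\ell_2$ sensitivity is at most $\gamma\sqrt{|V|}$, but I want to avoid depending on that.

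The next step is to certify the per-token release. If Line 9 is the exponential mechanism sampling $v$ with probability proportional to $\exp(u_v/\sigma)$, then the standard analysis in \citet{dwork2014algorithmic} applies directly. The ratio of probabilities under adjacent contexts is bounded by $e^{\gamma/\sigma}$ from the numerator and $e^{\gamma/\sigma}$ from the normalizer. This gives $2\gamma/\sigma$-DP per token. Because all utilities move in the same direction (each term is clipped to be nonpositive and the substituted example only shifts values within $[-\gamma,0]$), I would try to tighten this to $\gamma/\sigma$. I would not rely on that tightening: the difference can have either sign across $v$, so a two-sided bound may be all that holds. If instead the algorithm adds Gumbel noise and takes an argmax, I will invoke the Gumbel-max equivalence to reduce to the same statement. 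If it adds Gaussian noise to the summed vector before argmax, I will use report-noisy-max with Gaussian noise and analyze the zCDP/RDP of the Gaussian mechanism, using sensitivity $\gamma$ per coordinate. The main obstacle is avoiding the $\sqrt{|V|}$ factor, and I would handle it by post-processing only the argmax rather than releasing the whole vector.

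Finally, I compose over the $T$ generated tokens. The output at each step is a function of the public prefix and the private data, so adaptive composition applies. I would express each step's guarantee in R\'enyi DP (pure $\epsilon_0$-DP implies $\tfrac12\epsilon_0^2$-zCDP) and sum over $T$ steps. Then I convert to $(\varepsilon,\delta)$-DP via $\varepsilon = \rho + 2\sqrt{\rho\log(1/\delta)}$ with $\rho = T\epsilon_0^2/2$ and $\epsilon_0 = 2\gamma/\sigma$. Solving for $\sigma$ given $(\varepsilon,\delta,\gamma,T)$ yields the calibration claimed in Theorem~\ref{thm:poetry}.

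Two further points complete the argument. The normalizer $Z$ in Equation~\ref{eqn:poe_break} cancels inside the exponential mechanism, so it raises no issue. The PoE approximation is used only to motivate the algorithm and plays no role in the privacy proof, because privacy is a property of the computation actually performed.
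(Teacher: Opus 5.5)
Your argument is correct, and its per-token step coincides with the paper's. Each clipped summand lies in $[-\gamma,0]$, so substituting one example moves every coordinate of the summed utility by at most $\gamma$. Line 9 is then the exponential mechanism with the standard two-sided factor $2$, and you are right not to rely on the monotone tightening. One notational point: Line 9 samples proportionally to $\exp[\hat{y}_i\sigma/(2\gamma)]$. So in the paper $\sigma$ is itself the per-token privacy parameter, $\epsilon_0=\sigma$, not a temperature with $\epsilon_0=2\gamma/\sigma$.

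The genuine difference is how you compose over the $T$ tokens. The paper invokes the advanced composition theorem of Kairouz et al., giving $\varepsilon = T\sigma(e^{\sigma}-1)+\sigma\sqrt{2T\log\delta^{-1}}$, which is exactly the equation used to calibrate $\sigma$ in Line 2. You go through zCDP instead and obtain $\varepsilon = T\sigma^2/2+\sigma\sqrt{2T\log\delta^{-1}}$. Since $\sigma(e^{\sigma}-1)\ge\sigma^2>\sigma^2/2$, your bound is strictly tighter. It therefore certifies the $\sigma$ chosen in Line 2 a fortiori, and would even allow a larger $\sigma$ (less randomness) for the same budget. The paper's route buys only an exact match with the stated calibration. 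Your closing sentence should accordingly say that the algorithm's calibration is implied by your bound, not that solving your equation ``yields'' it.

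Your Gumbel and Gaussian fallbacks are unnecessary, since Line 9 is the exponential mechanism. The Gaussian one would also not work as described: post-processing to the argmax cannot remove the $\sqrt{|V|}$ factor from a bound proved for the full noisy vector. Avoiding that factor would require a separate analysis of the noisy maximum itself.
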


\begin{proof}
Each utility value is clipped to the range $\vl_i \in [-\gamma,0]$. Therefore, sampling the class for a single token proportional to $\exp[\hat{\vy_i}\varepsilon/(2\gamma)]$ satisfies DP~\citep{dwork2014algorithmic}. The composition theorem is due to~\citet{kairouz2015composition}. A detailed proof is given in Appendix~\ref{app:priv-proof}.
\end{proof}

\begin{algorithm}[t]
\caption{In-context learning with differential privacy from a Product-of-Experts approximation to LLMs.}
\label{alg:dp_icl_ci}
\begin{algorithmic}[1]
\REQUIRE $\vx$, query; $\mC_{1}, \mC_{2}, \cdots, \mC_{J}$, in-context examples; $T_{\max}$, number of tokens; $LM(\cdot|\cdot)$, pretrained and frozen LLM that returns log-probabilities; $(\varepsilon,\delta)$, DP parameters; $\gamma$, clipping-bound.
\ENSURE $\vz$: differentially private response
\STATE $\vz \leftarrow []$ \ \ \COMMENT{Initialize empty response}
\STATE Use binary search to find smallest $\sigma$ s.t. \\ \qquad $\varepsilon \leq T\sigma (e^\sigma-1)+\sigma\sqrt{2T\log\delta^{-1}}$
\FOR{$t = 1$ to $T_{\max}$}
\STATE $\hat{\vy} \gets \mathbf{0}$ \ \ \COMMENT{Initialize with zeros} 
\vspace{1mm}
\FOR{$j = 1$ to $J$}
\STATE $\vl \gets LM(\cdot | \vz, \vx, \tpurple{\mC_j})$
\STATE $\hat{\vy} \gets \hat{\vy} + \text{clip}_{\gamma}(\vl)$
\ENDFOR
\vspace{1mm}
\STATE $y \gets \text{Sample $i$ proportional to } \exp[\hat{\vy_i}\sigma/(2\gamma)]$
\vspace{1mm}
\STATE $\vz \leftarrow \vz + [y]$ \COMMENT{Append token to response}
\vspace{1mm}
\ENDFOR
\vspace{1mm}
\STATE \textbf{return} $\vz$
\end{algorithmic}
\end{algorithm}

\subsection{Theoretical Analysis}\label{sec:theoretical_analysis}

In addition to the experimental comparison, we provide a theoretical perspective on our Product of Experts view. We focus on the conditional independence assumption in our and previous work and show that it is less restrictive than it may appear once we examine how contexts are actually used. In the examples (see~\cref{sec:scenarios}), the agent does not see the entire private in-context set of examples at once (e.g., the whole inbox, the full code base, or the full image stream). Instead, it is exposed to a disjoint, often singular subset of context when answering a query: a few emails about the current topic, a few recent code diffs, or a few image-label pairs for a VLM evaluation. It is natural to model this selection step as drawing i.i.d.\ ``views'' from a much larger underlying private state: for instance, sampling a few emails from the distribution of inbox messages, or sampling a few crops or patches from a full image.

Formally, we model a high-dimensional latent state $\mS$ (the full inbox, repository, or dataset), and each in-context example $\mC_j$ is obtained by applying a randomized projection operator $P_j$ to $\mS$ (e.g., a random retrieval rule, a random crop, or a random subsequence). Different choices of $P_j$ give different examples, and it is reasonable to treat these views as independent conditional on $\mS$. Under this ``many small views of a large object'' model, each example is an independent low-dimensional summary of the same underlying state. The theorem below shows that, aggregating the influence of a number of such independent views, the resulting probability distribution converges to the distribution that we would have obtained if we could condition directly on the high-dimensional state $\mS$.

We now state the assumptions for the theorem.
The key is that, while transformer attention creates cross-example interactions, we assume these interactions remain bounded as the number of examples grows, whereas the contribution from individual examples accumulates.

\begin{assumption}
\textbf{Bounded interaction}
\label{ass:bounded-interaction}
For a given query $\vx$ and output token $y$, we decompose the unnormalized LLM log-probability distribution as follows:
\begin{align}
\log \tilde{p}_{\mathrm{LLM}}( y \mid  \vx, \tpurple{\mC_{1:J}}) = \sum_{j=1}^J \psi(y, \tpurple{\mC_j}; \vx) + R_J(y, \tpurple{\mC_{1:J}}), \nonumber
\end{align}
where $\psi(y, \tpurple{\mC_j}; \vx)$ is the log-likelihood contribution of example $\tpurple{\mC_j}$, satisfying $|\psi(y, \mC; \vx)| \leq M$ for some $M < \infty$; $R_J(y, \mC_{1:J})$ is a residual capturing cross-example interactions, satisfying $|R_J(y, \mC_{1:J})| \leq B$ uniformly in $J$, $y$, and the choice of examples, for some $B < \infty$.
\end{assumption}

\begin{remark}[Interpretation of Assumption]
The assumption states that cross-example attention effects, in which the representation of $\mC_i$ is modified by attending to $\mC_j$, contribute a bounded amount to the final logits. This is plausible when: 1) Each example primarily attends to the query $\vx$ and to itself, with limited cross-example attention; 2) The model's in-context learning mechanism operates via a ``soft retrieval'' where each example independently predicts an output; 3) Attention weights to other examples are $O(1/J)$, so their total contribution remains bounded.

\end{remark}

\begin{theorem}[Convergence to Full-Context Distribution under Bounded Interaction]
\label{thm:bounded-interaction}
Let $\mS$ be a private state, and let $\mC_1, \mC_2, \ldots \overset{\mathrm{i.i.d.}}{\sim} P(\mC \mid \mS)$ be independent views. Suppose Assumption~\ref{ass:bounded-interaction} holds with constants $M$ and $B$. Define the normalized LLM probability distribution:
\begin{equation}
p_J(y \mid \vx, \mC_{1:J}) := \frac{\tilde{p}_{\mathrm{LLM}}(y \mid \vx, \mC_{1:J})}{\sum_{y'} \tilde{p}_{\mathrm{LLM}}(y' \mid \vx, \mC_{1:J})}.
\end{equation}
Define the limiting distribution:
\begin{equation}
p^\star(y \mid \vx, \mS) := \frac{p_{\mathrm{LLM}}(y \mid \vx) \exp\bigl(\bar{\psi}(y, \mS; \vx)\bigr)}{\sum_{y'} p_{\mathrm{LLM}}(y' \mid \vx) \exp\bigl(\bar{\psi}(y', \mS; \vx)\bigr)},
\end{equation}
where $\bar{\psi}(y, \mS; \vx) := \mathbb{E}_{\mC \sim P(\cdot \mid \mS)}[\psi(y, \mC; \vx)]$ is the expected marginal contribution. Then for almost every $\mS$:
\begin{equation}
\bigl\| p_J(\cdot \mid \vx, \mC_{1:J}) - p^\star(\cdot \mid \vx, \mS) \bigr\|_1 \xrightarrow{\mathrm{a.s.}} 0 \quad \text{as } J \to \infty.
\end{equation}
\end{theorem}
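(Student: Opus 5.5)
The plan is to reduce the statement to a strong law of large numbers applied coordinatewise over the (finite) vocabulary, and then to pass the limit through the softmax normalization. Fix the query $\vx$ and a state $\mS$. For each token $y$, the random variables $\psi(y,\mC_j;\vx)$, $j=1,2,\ldots$, are i.i.d.\ under $P(\cdot\mid\mS)$. By Assumption~\ref{ass:bounded-interaction} they are bounded by $M$, so they are integrable with mean $\bar\psi(y,\mS;\vx)$. The SLLN then gives
\begin{equation}
\frac{1}{J}\sum_{j=1}^J \psi(y,\mC_j;\vx) \xrightarrow{\mathrm{a.s.}} \bar\psi(y,\mS;\vx).
\end{equation}
Because the vocabulary is finite, intersecting these countably many (in fact finitely many) probability-one events gives simultaneous convergence for all $y$ almost surely. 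Doing this for each $\mS$ separately yields the ``for almost every $\mS$'' qualifier.

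The second step is to control the residual and the base term. I will write the unnormalized log-probability as the base term $\log p_{\mathrm{LLM}}(y\mid\vx)$ plus the aggregated per-example contribution plus $R_J$. Since $|R_J|\le B$ uniformly in $J$, $y$ and the examples, $R_J/J\to 0$ deterministically. The third step uses the fact that the map from a logit vector to its softmax is continuous, indeed $1$-Lipschitz from $\ell_\infty$ to total variation up to a factor of $2$. So it suffices to show that the logit vector of $p_J$ converges, uniformly in $y$ and modulo an additive constant independent of $y$ (which the normalization removes), to the logit vector $\log p_{\mathrm{LLM}}(y\mid\vx)+\bar\psi(y,\mS;\vx)$ of $p^\star$. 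The $\ell_1$ convergence then follows immediately, and it holds almost surely on the event from the first step.

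The main obstacle is the scaling of the aggregated term. Taken literally, the decomposition in Assumption~\ref{ass:bounded-interaction} gives logits $\sum_j\psi \approx J\bar\psi$, which grow linearly in $J$. The resulting softmax would then concentrate on $\arg\max_y\bar\psi$ rather than converge to $p^\star$, whose exponent is $\bar\psi$ itself. I would therefore first make precise the normalization under which the theorem is meant: each example's contribution enters as an average, i.e.\ the logits are $\log p_{\mathrm{LLM}}(y\mid\vx)+\frac1J\sum_j\psi(y,\mC_j;\vx)$ plus a residual. This matches Remark item~3, where attention weights are $O(1/J)$. I must also ensure the residual vanishes in the limit. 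If only $|R_J|\le B$ is available, the residual perturbs the limit by at most a factor $e^{\pm 2B}$ in each probability, which yields only a bound $\|p_J-p^\star\|_1\lesssim e^{2B}-1$ in the limit. Exact convergence needs $R_J\to 0$, or $R_J$ entering with the same $1/J$ scaling.

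I would try first to state this $1/J$-averaged reading explicitly, with $R_J$ scaled accordingly, and then carry out the three steps above. As a fallback, if the unaveraged form is intended, I would prove the weaker statement that $p_J$ converges almost surely to the point mass (or uniform distribution) on $\arg\max_y\bar\psi(y,\mS;\vx)$, and note that this coincides with $p^\star$ only at zero temperature.
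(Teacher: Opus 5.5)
Your ``main obstacle'' paragraph is the crux, and your diagnosis is correct. With Assumption~\ref{ass:bounded-interaction} read literally, the statement is false, so it cannot be proved as written.

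A minimal counterexample: take two tokens, $\psi(y,\mC;\vx)=\bar\psi(y)$ independent of $\mC$ with $\bar\psi=(0,-1)$, $R_J\equiv 0$, and $p_{\mathrm{LLM}}(\cdot\mid\vx)$ uniform. Then $p_J\propto e^{J\bar\psi}\to(1,0)$, while $p^\star=(1,e^{-1})/(1+e^{-1})$, so $\|p_J-p^\star\|_1\to 2e^{-1}/(1+e^{-1})\approx 0.54$.

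The paper's proof reaches its conclusion through two slips.
\begin{itemize}
\item It correctly writes $p_J(y)=\exp(J u_J(y))/\sum_{y'}\exp(J u_J(y'))$, but then equates this with $\exp(u_J(y)-u_J(y_{\max}))/\sum_{y'}\exp(u_J(y')-u_J(y_{\max}))$, dropping the factor $J$. If the factor is kept, $J v_J(y)\to-\infty$ wherever $v^\star(y)<0$, which is exactly the concentration on $\argmax_y\bar\psi(y,\mS;\vx)$ you predicted.
\item Even granting that step, $\mathrm{softmax}(v^\star)$ is proportional to $\exp\bar\psi(y,\mS;\vx)$, not to $p_{\mathrm{LLM}}(y\mid\vx)\exp\bar\psi(y,\mS;\vx)$. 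The prior factor appears nowhere in the decomposition of Assumption~\ref{ass:bounded-interaction}; it is simply asserted at the end.
\end{itemize}
Once the base term is added, the paper's steps (coordinatewise SLLN, $R_J/J\to0$, continuity of softmax on a finite vocabulary) prove your reformulated statement, with logits $\log p_{\mathrm{LLM}}(y\mid\vx)+\frac1J\sum_j\psi(y,\mC_j;\vx)+\frac1J R_J$. Your three-step plan proves the same thing. In particular, the residual vanishes in the paper's argument only because it is divided by $J$ and never multiplied back.

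Your observation that an unscaled residual with $|R_J|\le B$ only gives $\limsup_J\|p_J-p^\star\|_1\le e^{2B}-1$ is also right. One bookkeeping point: the base term $\log p_{\mathrm{LLM}}(y\mid\vx)$ in your second step is not supplied by the Assumption either. State it explicitly as part of the modified hypothesis, alongside the $1/J$ scaling, rather than presenting it as a rewriting.

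One correction to your fallback. Under the literal reading, what holds almost surely is $p_J(\mathcal{A})\to1$ for $\mathcal{A}:=\argmax_y\bar\psi(y,\mS;\vx)$. This follows because $\frac1J\log\tilde p_{\mathrm{LLM}}\to\bar\psi$ uniformly on the finite vocabulary, and it gives convergence to the point mass when the maximizer is unique. When $|\mathcal{A}|\ge2$, $p_J$ need not converge to the uniform distribution on $\mathcal{A}$, or at all. For $y_1,y_2\in\mathcal{A}$,
$\log\bigl(p_J(y_1)/p_J(y_2)\bigr)=\sum_{j\le J}\bigl[\psi(y_1,\mC_j;\vx)-\psi(y_2,\mC_j;\vx)\bigr]+R_J(y_1,\mC_{1:J})-R_J(y_2,\mC_{1:J})$.
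This is a mean-zero random walk with bounded increments plus a bounded term. It has $\limsup=+\infty$ and $\liminf=-\infty$ almost surely unless the increments are degenerate, and even then $R_J$ may oscillate. So phrase the fallback as concentration on $\mathcal{A}$ rather than convergence to a specific limit.
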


\begin{proof}
Define the score vector $u_J(y) := \frac{1}{J}\log \tilde{p}_{\mathrm{LLM}}(y \mid \vx, \mC_{1:J})$. By Assumption~\ref{ass:bounded-interaction}:
\begin{align}
u_J(y) &=  \frac{1}{J}\sum_{j=1}^J \psi(y, \mC_j; \vx) + \frac{1}{J}R_J(y, \mC_{1:J}).
\end{align}

We analyze each term.
\textbf{Sum term:} By the strong law of large numbers (SLLN), $\mC_j \overset{\mathrm{i.i.d.}}{\sim} P(\cdot \mid \mS)$ and $|\psi(y, \mC_j; \vx)| \leq M$:
    $\frac{1}{J}\sum_{j=1}^J \psi(y, \mC_j; \vx) \xrightarrow{\mathrm{a.s.}} \bar{\psi}(y, \mS; \vx).$
\textbf{Residual term:} Since $|R_J| \leq B$, $\left|\frac{1}{J}R_J\right| \leq \frac{B}{J} \to 0$.

Thus $u_J(y) \xrightarrow{\mathrm{a.s.}} \bar{\psi}(y, \mS; \vx)$ for each $y$. Since $\mathcal{Y}$ is finite, this convergence is uniform over $y$.
Now, the normalized distribution $p_J$ can be written as, with $y_{\max} = \arg\max_{y'} u_J(y')$:
\begin{align}
p_J(y \mid \vx, \mC_{1:J}) &= \frac{\exp(J \cdot u_J(y))}{\sum_{y'} \exp(J \cdot u_J(y'))} \nonumber \\
&= \frac{\exp(u_J(y) - u_J(y_{\max}))}{\sum_{y'} \exp(u_J(y') - u_J(y_{\max}))}.
\end{align}

Define $v_J(y) := u_J(y) - \max_{y'} u_J(y')$ and, similarly, $v^\star(y) := \bar{\psi}(y, \mS; \vx) - \max_{y'} \bar{\psi}(y', \mS; \vx)$. Since $u_J \to \bar{\psi}$ uniformly and taking the maximum is continuous on finite sets, we have $v_J \to v^\star$ uniformly.

The softmax map $v \mapsto \exp(v) / \sum_{y'} \exp(v(y'))$ is continuous. Therefore:
\begin{equation}
p_J(y) = \mathrm{softmax}(v_J)_y \;\xrightarrow{\mathrm{a.s.}}\; \mathrm{softmax}(v^\star)_y = p^\star(y \mid \vx, \mS) \nonumber.
\end{equation}
Continuity of the $\ell_1$ norm completes the proof.
\end{proof}

\begin{table*}[]
\centering
\setlength{\tabcolsep}{4pt}
\renewcommand{\arraystretch}{1.2}
\caption{Comparison against prior art. For each dataset, the accuracy of each method is significantly higher than the 0-shot result, demonstrating ICL. For both $8$ and $25$ in-context examples across three datasets, our method achieves significantly higher accuracy. The DP is set to $\varepsilon=4$, following prior work~\citep{tang2024dpfewshot}. The LLM is \qwen~and Table~\ref{tab:repr_bigtable} reproduces this table with \llama.}
\begin{tabularx}{\textwidth}{lrrrclll}
\toprule
 & $\varepsilon$ & \textbf{Num data} & \textbf{Method} & \textbf{Lbl. Priv.} & \textbf{AGNews} & \textbf{DBPedia} & \textbf{TREC} \\
\midrule
No context & 0 & 0 & Empty & \checkmark & $65.4_{ \pm 0.8}$ & $71.2_{ \pm 1.0}$ & $68.4_{ \pm 0.7}$ \\
Synthetic ~\citep{tang2024dpfewshot} & 4 & $10^5$ & Synth. data & \texttimes & $79.8_{\pm 0.7}$ & $81.9_{\pm 0.6}$ & $76.3_{\pm 1.1}$ \\
\midrule
\qquad \textit{8 context examples:} & & & & & & & \\
Privacy-by-Sampling ~\citep{wu2023privacyhistogramllm} & 4 & 8 & Subsampling & \checkmark & $80.7_{ \pm 0.5}$ & $70.3_{ \pm 0.9}$ & $72.2_{ \pm 0.9}$ \\
RNM ~\citep{wu2023privacyhistogramllm} & 4 & 8 & Hard voting & \checkmark & $65.5_{ \pm 0.6}$ & $46.3_{ \pm 0.8}$ & $52.4_{ \pm 0.9}$ \\ \rowcolor{LightCyan}
Product-of-Experts (ours) & 4 & 8 & Soft pred. & \checkmark & $\textbf{86.3}_{ \pm 0.5}$ & $\textbf{87.3}_{ \pm 0.6}$ & $\textbf{78.9}_{ \pm 0.5}$ \\
\qquad \textit{25 context examples:} & & & & & & & \\
Privacy-by-Sampling ~\citep{wu2023privacyhistogramllm} & 4 & 25 & Subsampling & \checkmark & $81.5_{ \pm 0.4}$ & $75.2_{ \pm 1.1}$ & $68.1_{ \pm 0.7}$ \\
RNM ~\citep{wu2023privacyhistogramllm} & 4 & 25 & Hard voting & \checkmark & $85.3_{ \pm 0.6}$ & $85.7_{ \pm 0.7}$ & $76.7_{ \pm 0.6}$ \\ \rowcolor{LightCyan}
Product-of-Experts (ours) & 4 & 25 & Soft pred. & \checkmark & $\textbf{87.0}_{ \pm 0.5}$ & $\textbf{88.0}_{ \pm 0.6}$ & $\textbf{78.8}_{ \pm 0.5}$ \\
\midrule
\textcolor{gray}{Non-private, no DP applied} & \textcolor{gray}{$\infty$} & \textcolor{gray}{8} & \textcolor{gray}{In-context} & \textcolor{gray}{-} & \textcolor{gray}{$87.5_{ \pm 0.5}$} & \textcolor{gray}{$89.9_{ \pm 0.5}$} & \textcolor{gray}{$80.5_{ \pm 0.8}$} \\
\textcolor{gray}{Non-private, no DP applied} & \textcolor{gray}{$\infty$} & \textcolor{gray}{25} & \textcolor{gray}{In-context} & \textcolor{gray}{-} & \textcolor{gray}{$\textbf{87.8}_{ \pm 0.6}$} & \textcolor{gray}{$\textbf{92.0}_{ \pm 0.7}$} & \textcolor{gray}{$\textbf{81.0}_{ \pm 0.7}$} \\
\bottomrule
\end{tabularx}
\label{tab:bigtable}
\end{table*}

\begin{table}[h]
\centering
\setlength{\tabcolsep}{3pt}
\renewcommand{\arraystretch}{1.3}
\caption{Results on arithmetic tasks in the GSM8k dataset. All evaluations have higher accuracy than the 0-shot accuracy at 14\% -- showing the ICL benefit. Across varying numbers of ICL examples, our method achieves significantly higher accuracy than RNM.}
\begin{tabular}{lr r r}
\toprule
& \multicolumn{3}{l}{\textbf{Number of examples}} \\
 & \textbf{4} & \textbf{8} & \textbf{20} \\
\midrule
\textcolor{gray}{Non-private, no DP} & \textcolor{gray}{$44.2_{ \pm 1.0}$} & \textcolor{gray}{$44.5_{ \pm 0.8}$} & \textcolor{gray}{$46.0_{ \pm 0.9}$} \\
\midrule
RNM~\citep{wu2023privacyhistogramllm} & $15.7_{ \pm 0.6}$ & $20.3_{ \pm 0.5}$ & $36.5_{ \pm 0.6}$ \\ \rowcolor{LightCyan}
PoE (Ours) & $\textbf{37.3}_{ \pm 1.3}$ & $\textbf{40.9}_{ \pm 1.1}$ & $\textbf{43.1}_{ \pm 1.1}$ \\
\bottomrule
\end{tabular}
\label{tab:gsm8k_acc}
\end{table}

\begin{remark}[Rate of convergence]
The proof reveals that convergence occurs at the rate $O(1/\sqrt{J})$ from the SLLN plus $O(1/J)$ from the vanishing residual and prior terms. In practice, this suggests that moderate values of $J$ (e.g., $J = 20$--$50$) should suffice for near-convergence.
\end{remark}

\begin{remark}[Connection to our algorithm]
Multiple privacy-preserving Algorithms, including ours, query each example independently, i.e., computing $p(y \mid \mC_j, \vx)$ separately. Under Assumption~\ref{ass:bounded-interaction}, this independent querying discards only the bounded residual $R_J$, which vanishes per Theorem~\ref{thm:bounded-interaction} as $J$ grows.
The theorem shows that, conditional on the boundedness assumption, the error introduced by independent querying vanishes as $J$ grows. 
This provides a theoretical framework for understanding when independent querying might be appropriate, and we present experimental results in Section~\ref{sec:experiments} that support the assumption, even without the noise required for DP. 
\end{remark}

\subsection{Methodological comparison to prior approaches}

In the experimental results, we compare with three prior approaches. The first approach is named after the Report-Noisy-Max (RNM) method~\citep{dwork2014algorithmic}. The method relies on the same conditional independence assumption~\citep{wu2023privacyhistogramllm}, but uses thresholded ``hard predictions,'' which discard the nuance of the predictive likelihoods. This would be achieved by replacing line 7 in Algorithm~\ref{alg:dp_icl_ci} with $\hat{\vy} \gets \hat{\vy} + \text{one-hot}(\argmax \vl)$, which means that $\hat{\vy}$ is updated with a vector of all zeros and a single one where the largest likelihood occurs. The RNM then relies on observing that such a vector has sensitivity 1, and adds proportionally scaled noise; we refer to their paper for the proof of the constants.

Another approach towards the same DP guarantee is `Privacy-Amplification-by-Subsampling' (PbS), which also preserves the `uncertainty nuance' of predictions. Similar to DP-SGD~\citep{abadi2016deep}, this method repeatedly samples subsets of the context to make noisy predictions, which are then averaged. This was suggested by~\citet{wu2023privacyhistogramllm}, and we compare to it in the experimental section. A major downside of this method is that it requires significant computational resources. Due to the Poisson sampling assumption~\citep{chua2024private}, potentially the entire context could require computation in each random sample, of which there could be up to hundreds~\citep{wu2023privacyhistogramllm}.

\section{Experimental results}\label{sec:experiments}

The experimental evaluation and comparison with related work are conducted across five datasets in three categories: text classification, grade-school math, and vision-language. This section also includes results from a Membership Inference Attack to explore the attack scenario empirically. 

\subsection{Comparison of Hard or Soft predictions}\label{sec:hard_or_soft}

First, we compare the thresholded against the soft predictions that motivate our work. Soft predictions preserve nuance in every prediction. For example, in a two-class prediction problem, two predictions might be \num{0.6} and \num{0.7} for one class, \num{0.4} and \num{0.3} for the other. PoE will work with log-probabilities $[\log 0.6 + \log 0.7, \log 0.4 + \log 0.3]$, instead of thresholded $[2, 0]$ used for RNM.

To empirically justify this approximation, we use the predictive likelihoods from the independent LLMs' predictions and compare the vectors from Hard Voting and Soft Predictions. Let's say one prediction over $K$ classes would be $\vp = [p_1, p_2, \cdots, p_K] \in [0,1]^K$. For hard voting, this would be thresholded to $\vh \in \{0,1\}; \sum \vh = 1$. For soft predictions, this would be clipped to $\vs  = [s_1, s_2, \cdots, s_K] \in [e^{-\gamma},1]^K$.

We compare the predictive likelihood at the target label $l_{\text{mean}}(\vq, y) = \vq_{y}$ for approximate vector $\vq$ at label $y$. $\vq$ could be either the hard thresholded $\vh$ or the clipped $\vs$. The predictive likelihood is important because it is the probability that this expert alone will sample the target label class. Secondly, we compare the $\ell_\infty$-norm $D_{\infty}(\vp, \vq) = \max_i |p_i - q_i|$ norm, which indicates the largest change in sampling probability between the unclipped vector, $\vp$, and thresholded or clipped $\vq$. To this end, we sample 3000 context and query examples from the GSM8k classification task~\citep{gsm8k} and make predictions with a Qwen3 model~\citep{qwen3technicalreport}. Compared between hard and soft predictions, the predictive likelihood increases from $42.3_{\pm 0.6}$\% to $45.3_{\pm 0.9}$\% and the $\ell_\infty$-norm decreases from $39.2_{\pm 0.5}$\% to $13.5_{\pm 0.1}$\%. This shows that the mean predictive likelihood and the $\ell_\infty$-norm are much better for soft predictions. These results motivate our method, since DP methods must sample from those private states.

\subsection{Text classification}

\begin{figure}[t]
    \centering
    \includegraphics[width=\linewidth]{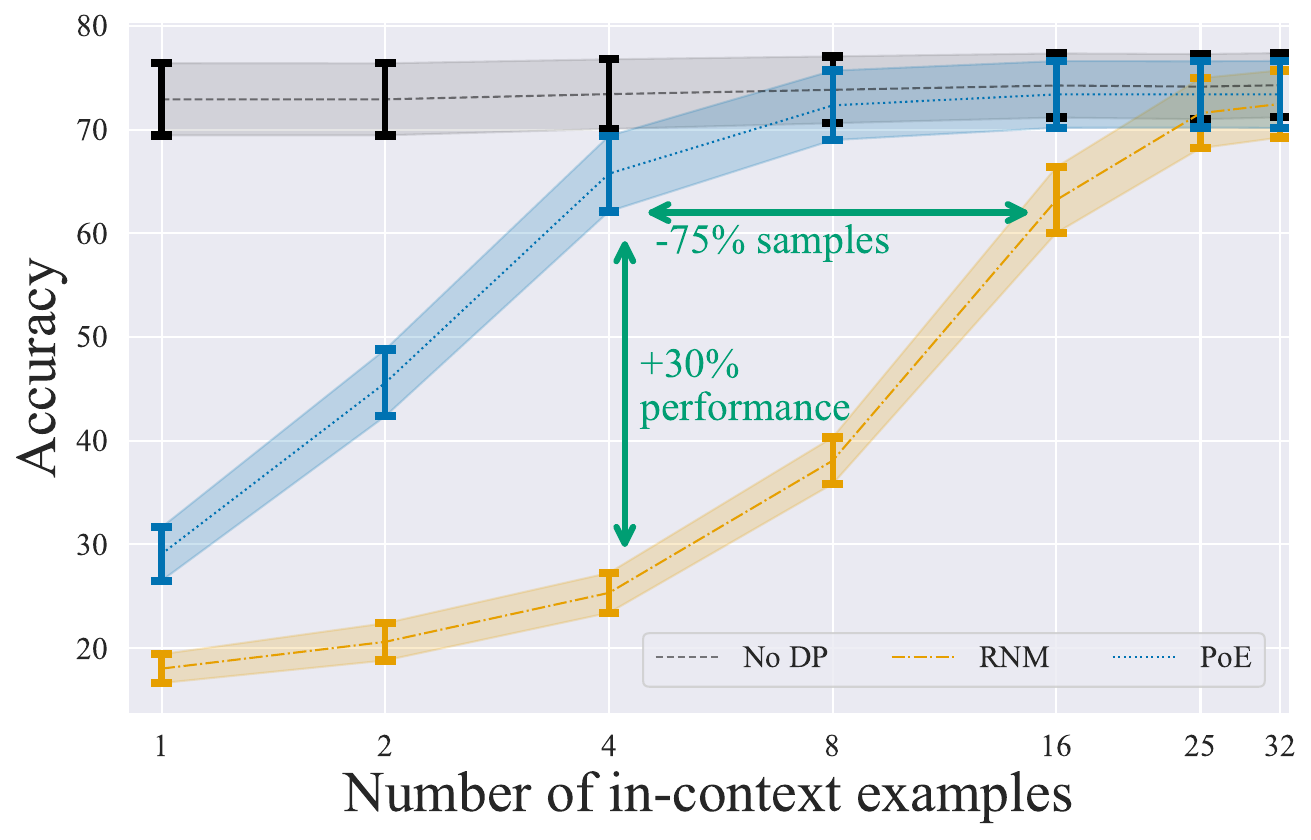}
    \caption{Average accuracy across AGNews, DBPedia, TREC, and GSM8k datasets. Our method performs significantly better than previous work -- especially for a small number of examples, where ICL is widely used. For $J=4$ examples, the improvement in accuracy is 30\% points on average. To achieve the same accuracy, RNM would need almost 4x as many in-context examples.}
    \label{fig:allfour_accuracy}
\end{figure}

\textbf{Text classification}
The evaluation tasks involve 4-way news classification on \textsc{AGNews}~\citep{zhang2015character}, 6-way question categorization on the \textsc{TREC} dataset~\citep{voorhees2000building}, and 14-way entity classification on the \textsc{DBPedia} dataset~\citep{zhang2015character}. This evaluation follows the setup of~\citet{zhao2021calibrate,tang2024dpfewshot}, and we use the prompt settings from the published code of~\citet{tang2024dpfewshot}. Their models are not available, so we evaluate their code with a \qwen~model. Appendix~\ref{app:extra_results} reproduces this experiment with a \llama~model.
The success metric is classification accuracy. Each accuracy is a mean (and $\pm$ standard error) of 25 random seeds; each random seed is the evaluation of 100 random prompts from the test set against a randomly drawn context set. To be comparable with previous work, evaluations for this task are done at $\varepsilon=4$. Where Approximate DP applies, $\delta=10^{-5}$ is used.  More details are provided in Appendix~\ref{app:exp-details-general}.

Table~\ref{tab:bigtable} shows the results of the text classification experiments. It is important to note that the synthetic data method of ~\citet{tang2024dpfewshot} provides a privacy guarantee against a training dataset of about \num{100000}+ examples. In contrast, for both PbS, RNM, and our method, the privacy guarantee holds against the actual context set, as explained in Appendix~\ref{app:threat_model_formal}. Despite this difference, the synthetic data method achieves lower accuracy than our method.

Across all three text classification datasets, our method achieves higher accuracy than other methods. Especially for a small number of examples, which is an important setting for in-context learning~\citep{gsm8k,ICL_VLM_pseudo-name}, our method scores more than 20\% points higher accuracy than RNM and more than 6\% points higher than PbS. This is likely due to the uncertainty information preserved in the soft predictions, c.f. Section~\ref{sec:hard_or_soft}.

\begin{table}[t]
\centering
\setlength{\tabcolsep}{3pt}
\renewcommand{\arraystretch}{1.3}
\caption{Results on the VLM pseudoname labelling task. Across varying numbers of in-context examples, our method achieves higher accuracy. For comparison, also the non-private but conditionally independent results are included; 0-shot accuracy is \num{20}\%.}
\begin{tabular}{l c c c}
\toprule
& \multicolumn{3}{c}{\textbf{Number of examples}} \\
 & \textbf{4} & \textbf{8} & \textbf{20} \\
\midrule
Non-private, no DP & $89.0_{\pm 0.6}$ & $89.1_{\pm 0.6}$ & $89.8_{\pm 0.6}$ \\
Soft cond. indep. & $82.6_{\pm 0.8}$ & $82.7_{\pm 0.8}$ & $83.3_{\pm 0.8}$ \\
Hard cond. indep. & $76.0_{ \pm 0.9}$ & $80.7_{ \pm 0.8}$ & $82.6_{ \pm 0.8}$ \\
\midrule
RNM~\citep{wu2023privacyhistogramllm} & $28.8_{ \pm 0.9}$ & $39.4_{ \pm 1.0}$ & $69.8_{ \pm 0.9}$ \\ \rowcolor{LightCyan}
PoE (Ours) & $\textbf{35.1}_{ \pm 1.0}$ & $\textbf{53.7}_{ \pm 1.0}$ & $\textbf{78.7}_{ \pm 0.8}$ \\
\bottomrule
\end{tabular}
\label{tab:vlm_accuracy}
\end{table}

\textbf{Math evaluation: } The math evaluation is done on the \textsc{GSM8k} dataset~\citep{gsm8k}. This dataset consists of math or arithmetic questions aimed at the level of grade-schoolers. We turn this into a 10-way classification task by calculating accuracy on getting the first digit of the answer correct. Although this evaluation is not standard, the accuracy is significantly higher than the default 10\% accuracy, demonstrating the LLMs' mathematical capabilities. 

Table~\ref{tab:gsm8k_acc} displays the results of the math evaluation on GSM8k. Across a number of examples, our method achieves higher accuracy than RNM. This aligns with the text classification results. Also, for a smaller number of examples, the improvement over RNM is more pronounced, which is an important aspect of in-context learning.

\textit{Results across a varying number of examples }
To assess performance across the full range of in-context learning use cases, we evaluate all four text classification datasets across a varying number of examples (Figure~\ref{fig:allfour_accuracy}). The practically relevant regime for ICL is 4--8 examples~\citep{min2022rethinking}, where few-shot learning significantly improves over zero-shot performance. Here, our method outperforms RNM by 30\% points at 4 examples, and to achieve comparable accuracy, RNM requires 4x as many context examples (16). This shows that our method is effective at improving accuracy in the practically relevant regime for in-context learning.

\subsection{Vision-language}

To demonstrate that our method is not only effective with text tokens but also applicable across modalities, we conduct a final experiment using a vision-language model. For simplicity, we focus on the task of pseudo-name labelling~\citep{ICL_VLM_pseudo-name}. In this task, a Vision-Language Model (VLM) is shown images and randomly assigned non-English names to them. The VLM will then be prompted with an image, and it must classify the image's name. It is a 5-way classification task. We choose this pseudo-name labelling setting so that the VLM cannot rely on pretraining knowledge and must learn from in-context examples. Previous work has established that this task is suitable for in-context learning~\citep{ICL_VLM_pseudo-name,derakhshani2023self_secat}. Similar to earlier experiments, the error bars in Table~\ref{tab:vlm_accuracy} are the standard error of the mean accuracy over 2500 random seeds, explained in Appendix~\ref{app:exp-details-general}.

Table~\ref{tab:vlm_accuracy} displays the results of the vision-language evaluation. These results were obtained with $\varepsilon=1$-DP, which is the recommended setting by~\citet{choosing_epsilon} and ~\citet{choosing_epsilon_02}. Across a number of examples, our method achieves higher accuracy than RNM. This aligns with the text classification and math evaluation results. Also, for a smaller number of examples, which are widely used in ICL, the improvement over RNM is more pronounced.

\textit{Comparison of the conditional independence assumption: } We also investigate the difference between Hard and Soft predictions in a classification setting. To this end, Table~\ref{tab:vlm_accuracy} shows the accuracies when only the conditional independence assumption is applied, but the noise required for DP is not yet added. This means that for Hard predictions, only the discrete votes are summed, and the class with the most votes is predicted. For soft predictions, this means that the clipped predictions are multiplied, and the class with the highest likelihood is predicted. For both the Hard and the Soft versions, the accuracy is lower than the non-private result, which is expected, as we introduce a strict assumption. However, the decrease in accuracy is much larger for Hard predictions than for soft predictions. Moreover, this decrease is much worse for a small number of examples -- likely because the uncertainty in predictive likelihood is more important. This shows that, even in the absence of DP noise addition or noisy sampling, Hard predictions have a worse impact on the final accuracy than soft predictions.

\subsection{Membership Inference Attack} \label{sec:mia}

\begin{table}[t]
    \centering
    \setlength{\tabcolsep}{3pt}
\renewcommand{\arraystretch}{1.3}
\caption{Empirical privacy vulnerability as measured by a Membership Inference Attack. These results show a significant attack vulnerability (AUROC 56-93\%), and DP significantly reduces the vulnerability, i.e., an AUROC close to the target value of 50\%.}
\begin{tabularx}{\linewidth}{X X X X X}
\toprule
 & AGNews & DBpedia & TREC & GSM8k \\
\midrule
No DP & 60.0 & 56.9 & 63.6 & 93.9 \\
$\varepsilon=1$ DP & 52.9 & 49.8 & 53.8 & 53.5 \\
\bottomrule
\end{tabularx}
\label{tab:mia}
\end{table}

Complementary to the analytical privacy guarantees, we run a Membership Inference Attack (MIA)~\citep{shokri2017membership} to obtain empirical results on privacy. Although a MIA cannot prove that the model uses a particular data point~\citep{zhang2024membershipinferenceattacksprove}, we investigate if the original ICL is vulnerable to an empirical privacy attack and to what extent our DP guarantee reduces this vulnerability. The MIA aims to output a score, ${s}$, indicating whether a datapoint is a member of the context examples. Several MIAs have been reported in the literature~\citep{camia,carlini_ref_model_MIA}, and we use the attack based on Likelihood Ratios (LiRA). 
We use likelihood-based MIA as a proxy for privacy vulnerability on sampled labels. In our case, the summed log-likelihood is the score ${s}$. While label-only MIAs exist~\citep{he2025labelonlyMIA}, they have only been shown to work for longer sequences by approximating soft likelihood. For simplicity, we use the proxy directly.
A comparatively high score indicates membership, and this is measured using the Area Under the ROC curve (AUROC) metric. The AUROC is the area under the curve on a plot where the x-axis is the False Positive Rate (flagging a member when it is not) and the True Positive Rate (flagging a member when it is). Appendix~\ref{app:exp-details-mia} provides more details. 

The results of the MIA are in Table~\ref{tab:mia}. The unprotected ICL setting is vulnerable to a membership inference attack, as indicated by an AUROC of \num{60.0} on the AGNews dataset, for example. 
Subsequently, the DP ICL has an AUROC of \num{52.9} on AGNews, indicating better privacy protection. The same trend, where the DP model is much closer to \num{50.0}, is observed for the other datasets. This shows that, in addition to the theoretical privacy guarantees, DP ICL is also empirically more private against MIAs.

\section{Conclusion}\label{sec:conclusion}

We introduce a novel approach to differentially private in-context learning that reformulates ICL through the lens of a Product-of-Experts (PoE) approximation. Our key contribution is the shift from \textit{hard voting} to \textit{soft prediction} aggregation. This enables our method to leverage the full uncertainty information in the model's predictive probabilities. This uncertainty information is shown to be beneficial compared to prior approaches that discard the uncertainty through discrete thresholding. The soft prediction approach, where log-probabilities across in-context examples are clipped and aggregated, proves particularly effective for a small number of examples (four to eight examples), which is the common setting in many real-world ICL applications.

Our experimental evaluation demonstrates consistent improvements across five datasets including text classification, math, and vision-language tasks. On average, we observe a 30\% point improvement for text classification with only 4 examples.
Beyond empirical performance, we provide both theoretical and empirical analyses of privacy. Theoretically, we propose an algorithm that guarantees Differential privacy. Additionally, we establish that under a bounded-interaction assumption, our Product-of-Experts approximation converges to the full-context distribution as the number of examples increases. Empirically, we complement the privacy guarantee with a Membership Inference Attack on four datasets. 
Overall, the improved privacy-utility trade-off makes our method practical for deployment in agentic AI and RAG settings where privacy-sensitive local data must be leveraged without compromising an individual's privacy.

\section*{Acknowledgements}

This work is financially supported by Qualcomm Technologies Inc., the University of Amsterdam and the allowance Top consortia for Knowledge and Innovation (TKIs) from the Netherlands Ministry of Economic Affairs and Climate Policy. CL is with Qualcomm AI research, which is an initiative of Qualcomm Technologies, Inc. and/or its subsidiaries. Correspondence may go to r.romijnders@uva.nl.

\section*{Impact Statement}

Our work advances private machine learning by enabling more effective privacy-preserving in-context learning. DP could have a positive societal impact by allowing systems to use local  private information while providing formal privacy guarantees. The improved privacy-utility trade-off makes DP ICL more accessible for real-world deployment.

However, while DP provides rigorous guarantees, organizations must remain thoughtful about its deployment. We warrant that DP cannot be used as a blanket justification for additional data access -- although this effect remains an open question~\citep{brough2022bulletproof}. Each deployment of any ICL variant should justify processing the data in contexts where data minimization would also be an option.

Second, we acknowledge that differentially private methods can have disparate impacts across different groups in the data. Prior research has demonstrated that privacy-preserving mechanisms may disproportionately affect model performance on minority classes and underrepresented populations~\citep{farrand2020neither,DPDisparateImpact}. While our theoretical and empirical contributions improve overall privacy-utility trade-offs, they do not directly address these fairness concerns. Future work should investigate how the Product-of-Experts approximation interacts with fairness objectives and develop techniques to ensure equitable utility across demographic groups.

\bibliography{main_bib}
\bibliographystyle{icml2026}

\newpage
\appendix
\onecolumn

\section{Notation}\label{app:notation}

We briefly outline the notation used in the main paper and in the following appendices.

\begin{itemize}[itemsep=0mm]  
    \item $\gamma$ is the clipping bound for the log-probabilities;
    \item $\sigma$ is the noise multiplier for the exponential mechanism. This is tightly related to the $\varepsilon$ parameter for Differential Privacy. For example, in Algorithm~\ref{alg:dp_icl_ci}, $\sigma = \frac{1}{T}\varepsilon$ when the Product-of-Experts is used with $T$ steps and naive composition;
    \item $T$ is the maximum number of tokens in the response;
    \item $J$ is the number of in-context examples;
    \item $\mC_j$ is the $j$-th in-context example;
    \item $\vx$ is the query, it is a sequence of tokens;
    \item $\vy_{<t}$ is the response up to and including time $t-1$;
    \item $\vy_t$ is the token at time $t$, sometimes the subscript is dropped to have just $y$; $\vy_{1:T}$ indicates the response sequence for all $T$ tokens;
    \item $\vl$ is a vector of log-likelihoods, for example, when in~Algorithm~\ref{alg:dp_icl_ci}: ``$\vl \gets LM(\cdot | \vz, \vx, \tpurple{\mC_j})$,'' then $\vl$ is a vector of length equal to the vocabulary of the LLM where each value $\vl_i$ is the log-likelihood for token $i$;
    \item $u(y, \mC_j)$ is the utility function for the exponential mechanism. It is the log-probability of the response $y$ given a \textbf{single} in-context example $\mC_j$;
    \item $U(y, \mC_{1:J})$ is the utility function when \textbf{all} the in-context examples $\mC_{1:J}$ are considered; this distinction will be made more clear in Appendix~\ref{app:grouping};
    \item $\text{clip}_\gamma(\vl)$ is the clipping function for the log-probabilities. It sets all values outside the interval $[-\gamma, 0]$ to zero;
    \item $\text{vclip}_\gamma(\vl)$ in the section on Gaussian Mechanism is the vector clipping function $\text{vclip}_\gamma(\vx) = \vx / \max(\gamma, \|\vx\|_2)$;
    \item $Z$ is, unless otherwise specified, a constant of proportionality -- a normalization constant;
    \item $\varepsilon$ and $\delta$ are constants that define the Differential Privacy guarantee; generally, $\varepsilon \leq 1$ and $\delta \leq 10^{-5}$ are preferred. Only for Tables~\ref{tab:bigtable} and~\ref{tab:repr_bigtable} do we use $\varepsilon = 4$, to follow previous work and their implementation details~\citep{tang2024dpfewshot};
    \item $\mD$ is a dataset, so a collection of documents, denoted by $\mC_j$ for $j=1, \ldots, J$.
\end{itemize}

Apart from notation, we clarify that with `prompt' we mean the entire textual input to an LLM, which consists of both a query and context examples. Abstractly written, $\text{prompt} = \text{query} + \text{context examples}$.

\section{Details of the Differential Privacy Analysis}\label{app:priv-details}

\subsection{Privacy for Algorithm 1}\label{app:priv-proof}

This section is an extended proof that Algorithm~\ref{alg:dp_icl_ci} satisfies Differential Privacy for defined settings of $\sigma$ and $T$. We start from the Product-of-Experts structure, repeated from Equation~\ref{eqn:poe_break}, $p(y_t | \vy_{<t}, \vx , \tpurple{\mC_{1:J}}) \propto  \prod_j p(\vy_t |  \vy_{<t}, \vx , \tpurple{\mC_j})$.

This structure is chosen such that the contribution of each context example, $\mC_j$, is isolated. As such, we repeat the definition of Differential Privacy from~\citet{dwork2014algorithmic}.

\hspace{5mm}\fbox{\begin{minipage}{30em}
A randomized algorithm $A(\cdot)$ is $\varepsilon$-differentially private if the following holds for any two adjacent datasets $\mD$, $\mD'$, and for any subset $\mathcal{S}$ of outputs:
\begin{align}
    \Pr[A(\mD) \in \mathcal{S}] \leq e^\varepsilon \Pr[A(\mD') \in \mathcal{S}]  \label{eqn:definition_dp},
\end{align}
where $\mD = \{ \mC_j \}_{j=1}^J$ is a dataset that contains $J$ documents and $\mD'$ is the same dataset where at most one sample is different. This adjacency is further explained in Section~\ref{app:threat_model_formal}. In some cases, to Equation~\ref{eqn:definition_dp} a small constant $\delta$ is added which is usually $10^{-6}$. The definition is then named ``$(\varepsilon, \delta)$ (Approximate) Differential Privacy.''
\vspace{1mm}
\end{minipage}}

In our case, this translates to \textit{``For any set of in-context examples, if only one example were different, the probability of the predicted class would differ by at most a factor $e^\varepsilon$.''}

Our randomized algorithm is the Exponential Mechanism from~\citet{dwork2014algorithmic}, and works as follows. There is a token $y$, from a discrete set of tokens $\mathcal{Y}$. Depending on the context, $\mC_{1:J}$, there is a utility function $u(y, \mC_j)$. This utility follows from Equation~\ref{eqn:prod_experts_sum_vectors}, which we repeat here.

\begin{align}
    \log \prod_{j=1}^{J} p(\vy_t |  \vy_{<t}, \vx, \tpurple{\mC_j} ) = \sum_{j=1}^{J} \log p(\vy_t | \vy_{<t}, \vx, \tpurple{\mC_j} ).
\end{align}

As such, the utility reflects $\log p(\vy_t |  \vy_{<t}, \vx, \tpurple{\mC_j} )$. The sensitivity, then, can be bounded by the clipping operation. We take the likelihood factor $l = u(\vy_t, \mC_j) = \log p(\vy_t | \vy_{<t}, \vx, \tpurple{\mC_j} )$ and the clipping function $\text{clip}_\gamma(\vl)$:

\begin{align}\label{eqn:clipping_def}
  \text{For scalar $l$:   }  \text{clip}_\gamma(l) = \begin{cases}
        l & \text{if} \ \ -\gamma \leq l \leq 0 \\
        0 & \text{otherwise}
    \end{cases},\qquad \quad
\text{For vector $\vl$:   }\text{clip}_\gamma(\vl)_i = \begin{cases}
        \vl_i & \text{if} \ \ -\gamma \leq \vl_i \leq 0 \\
        0 & \text{otherwise}
    \end{cases}
\end{align}

The upper clipping at 0 is trivially achieved, as the log-likelihood is always negative because, for a discrete domain, the probability assigned to a class is less than or equal to 1. This maps to the definition of sensitivity in~\citet{dwork2014algorithmic} and we use their Exponential Mechanism with the following distribution:
\begin{align}
    \Pr[y] \propto e^{\frac{\varepsilon}{2\gamma} \sum_{j=1}^J u(y, \mC_{j})}
\end{align}

In this notation, $\sigma = \varepsilon$ when the Product-of-Experts is used with $T=1$.

\textbf{Accounting } The $\sigma$ in Algorithm~\ref{alg:dp_icl_ci} could be used for privacy accounting over the multiple tokens $T$. By composition rules of DP $\varepsilon = T \sigma$, which evenly divides the $\varepsilon$ among the $T$ time steps~\citep{dwork2014algorithmic}. One can use the Advanced Composition Theorem of Differential privacy~\citep{kairouz2015composition} if one is willing to switch to ``Approximate Differential Privacy:'' $\varepsilon = T\sigma(e^{\sigma}-1) + \sigma \sqrt{2T \log \delta^{-1}}$ for a desired setting of $\delta$. This composition theorem can be advantageous because of the $\sqrt{T}$ scaling rather than linear scaling.
In case of the Report-Noisy-Max, we refer to the original publication~\citep{wu2023privacyhistogramllm} for the relation between $\sigma$, $T$, and $\varepsilon$ and $\delta$. The experiments in Section~\ref{sec:experiments} do not use accounting since the classification predictions have single-token outputs.

\subsection{Grouping in L2 sensitivity}\label{app:grouping}

In related work~\citep{tang2024dpfewshot}, as well as ours, a method is used that we call ``grouping.'' We define and explain it in this section for both our method and highlight how it was used in previous work.

When clipping utilities for the Exponential Mechanism, or when clipping vectors as in the Gaussian Mechanism, it is common practice to align the clipping unit with the privacy unit~\citep{abadi2016deep,chua2024mind}. For document-level privacy, for example, this would mean that the contribution of each document would be clipped separately. However, one can ``group'' privacy units into clipping units and maintain the proof of Differential Privacy. We outline this method for the exponential mechanism and the Gaussian Mechanism.

\textbf{Exponential Mechanism }

The exponential mechanism is defined as: $\Pr[y] \propto e^{\frac{\varepsilon}{2\Delta} U(y, \mD)}$, where $y$ is the token, and $U(y, \mD)$ is the utility function for the full dataset $\mD$. The sensitivity is:
\begin{align}
    \Delta = \max_{\mD \sim \mD'} |U(y, \mD) - U(y, \mD')|  \ \ \ \forall y \in \mathcal{Y}
\end{align}

Then, sampling a token $y$ is $\varepsilon$-differentially private.
Now, assume there is an arbitrary loss function $u(y, \mD_i)$, then one can ensure bounded utility by clipping the loss function, where the clipping function is defined in Equation~\ref{eqn:clipping_def}:
\begin{align}
    u(y, \mD) \coloneqq \sum_{i \in |\mD|} \text{clip}_\gamma(u'(y, \mD_i))
\end{align}

Assume, for now, that the utility of the dataset is the sum of utilities per sample:
\begin{align}
    U(y, \mD) = \sum_{i \in |\mD|} u(y, \mD_i)
\end{align}

Then, the sensitivity is $\Delta = \max_{\mD \sim \mD'} |U(y, \mD) - U(y, \mD')| \leq \gamma$.

A grouping, however, does not affect the sensitivity. Let's examine, without loss of generality, a group size of 3 and assume that the dataset size is divisible by 3. Then, we can define the utility function:

\begin{align}
    U(y, \mD) \coloneqq \sum_{k \in |\mD| / 3} \text{clip}_\gamma(U'(y, \mD_{3k-2}, \mD_{3k-1}, \mD_{3k}))
\end{align}

Even for the grouped utility function, the sensitivity still holds:

\begin{align}
    \Delta = \max_{\mD \sim \mD'} |U(y, \mD) - U(y, \mD')| \leq \gamma.
\end{align}

This grouping thus does not affect sensitivity. It does, however, introduce a new balance. Generally, the amount of noise added is proportional to the clipping constant, $\gamma$, and the signal is proportional to the number of in-context examples, $J$. When there are $J$ in-context examples, the signal-to-noise ratio is $\frac{J\gamma}{\gamma} = J$. When there are only $K = \frac{J}{3}$ utilities to be summed, the signal-to-noise ratio would be $\frac{K\gamma}{\gamma} = \frac{1}{3}J$, which is only one-third of its original value and can negatively impact predictive accuracy. The other side of the balance, though, is that predictions can be better when an LLM or VLM draws on multiple examples rather than a single one. In that case, the better prediction might outbalance the extra noise from the worse signal-to-noise ratio. We use group size 2 for all VLM experiments (and not for the text classification experiments) and report a small hyperparameter experiment for its effects in Section~\ref{app:exp-details-general}.

\textbf{Gaussian Mechanism } Grouping is also applicable for the Gaussian Mechanism, which is used in the source code of~\citet{tang2024dpfewshot}.
In this case, the dataset is a set of vectors $\mD = \{\vx_1, \vx_2, \cdots, \vx_N \}$.
An adjacent dataset has one vector changed, $\vx'_i$, and so $\mD' = \{\vx_1, \vx_2, \cdots, \vx'_i, \cdots, \vx_N \}$. Then, $\mD \sim \mD'$ denotes adjacent datasets.

For the Gaussian Mechanism, each vector is clipped so that the vector sum has a bounded sensitivity.
For vectors, we define the vector clipping function as $\text{vclip}_\gamma(\vx) = \vx / \max(\gamma, \|\vx\|_2)$.

Denote with $\vm_n = \text{vclip}_\gamma(\vx_n)$ the clipped vector. Then the sum $\vy = \sum_{n=1}^N \vm_n = \sum_n \text{vclip}_\gamma(\vx_n)$ has a bounded sensitivity.

\begin{align}
    \Delta_{\lTwo} = \max_{\mD \sim \mD'} \|\vy(\mD) - \vy(\mD')\|_2 \leq 2\gamma
\end{align}

The factor 2 arises here because a vector can point in all directions, and thus the sensitivity is twice the radius. In the Exponential Mechanism, the utility is monotonic, as defined in~\citet{dwork2014algorithmic}, so the factor 2 does not apply.

Without loss of generality, assume that the groupsize is three and that the $N$ is divisible by three.
We then have $K = N/3$ tuples: $\vz_k = \text{vclip}_\gamma(\vx_{3k-2} + \vx_{3k-1} + \vx_{3k})$.
Then, we care about $U(\mD) = \sum_{k=1}^K \vz_k = \sum_k \text{vclip}_\gamma(\vx_{3k-2} + \vx_{3k-1} + \vx_{3k})$ and its sensitivity:

\begin{align}
    \Delta_{\lTwo} = \max_{\mD \sim \mD'} \|U(\mD) - U(\mD')\|_2
\end{align}

Without loss of generality, assume that index $\vx_1$ is changed to $\vx'_1$. Then, the sensitivity is:
\begin{align}
\Delta_{\lTwo} &= \max_{\mD \sim \mD'} \|U(\mD) - U(\mD')\|_2 =
\max_{\vx_1 \sim \vx'_1} \| \text{vclip}_\gamma(\vx_1 + \vx_2 + \vx_3) - \text{vclip}_\gamma(\vx'_1 + \vx_2 + \vx_3) \|_2 \leq 2\gamma
\end{align}

As such, grouping does not change the sensitivity. It does, however, affect the `signal-to-noise` ratio, as explained previously.

\subsection{Details on the Threat Model with/without labels}\label{app:threat_model_formal}

We formalize the threat model with and without labels in this section.
Differential Privacy is defined in Equation~\ref{eqn:definition_dp} and depends on the definition of two adjacent datasets: $\mD,\mD'$: $\mD = \{ \mC_1 \} + \{ \mC_j \}_{j=2}^J$, $\mD' = \{ \mC'_1 \} + \{ \mC_j \}_{j=2}^J$, where without loss of generality, we assume that $\mC_1$ is the sample that changes.

\textbf{Full context privacy } In our method and~\citet{wu2023privacyhistogramllm}, privacy is defined against all in-context examples. This means that $\mD = \{ \mC_j \}_{j=1}^J$ is over $J$ samples in the context. Then, $\mD'$ can have any example in the context changed, like $\mC_1$ in the above example.

\textbf{Assume labels are public } The above is in contrast to the privacy guarantee in~\citet{tang2024dpfewshot}. They assume that labels are known and subsample per label. This means that $\mD = \{ \mC_j \}_{j=1}^J$ are all the $J$ samples from a particular label. Then, $\mD'$ can have any example \textit{for that particular label} changed. This is a less strict definition of privacy because, for $K$ labels, between one dataset and an adjacent dataset, $K$ samples can differ -- one per label.

\begin{figure}[t]
\centering
\begin{minipage}[t]{0.48\textwidth}
\centering
\includegraphics[width=0.99\linewidth]{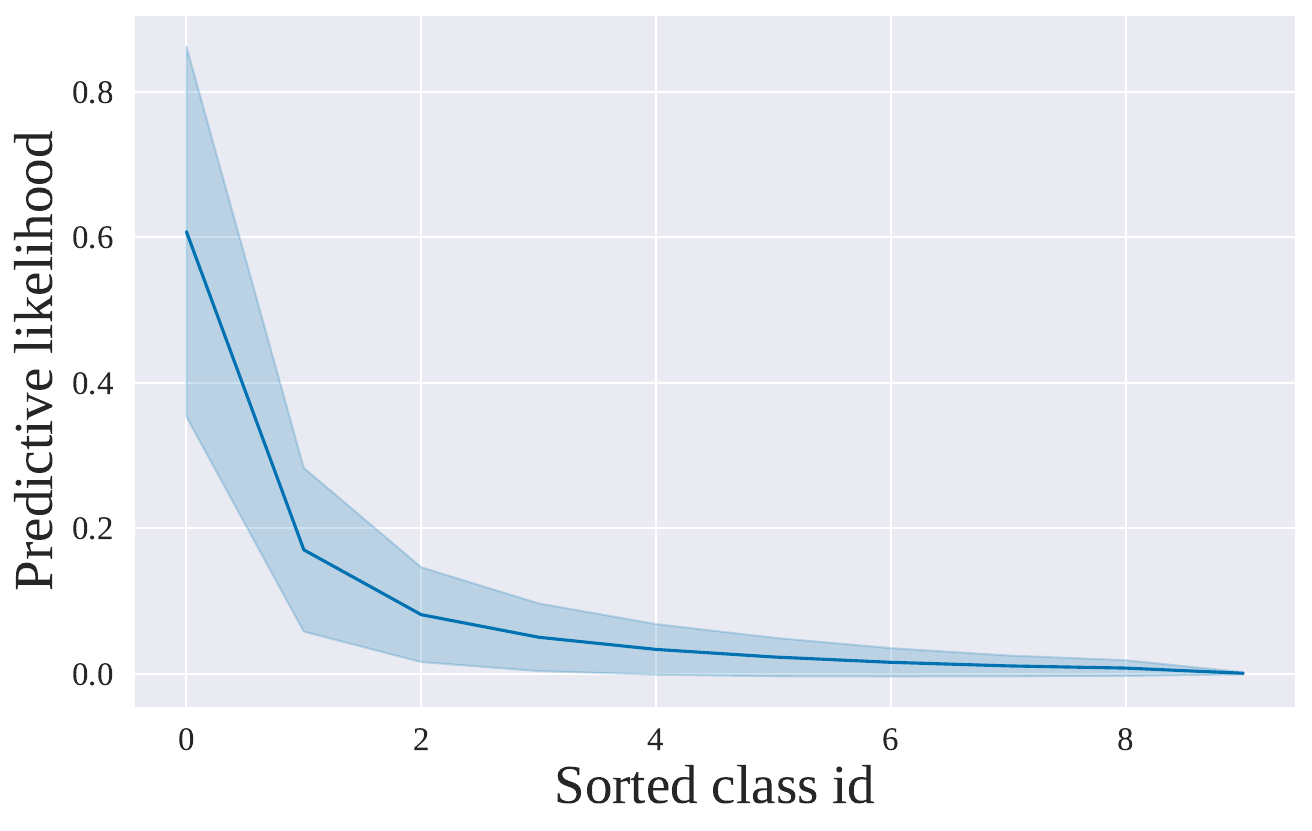}
\captionof{figure}{Predictive likelihoods on a 10-way classification task (GSM8k) with \qwen. The predictions are sorted, and the mean and std. deviation among 3000 random samples are plotted.}
\label{fig:statdiv}
\end{minipage}
\hfill
\begin{minipage}[t]{0.48\textwidth}
\centering
\includegraphics[width=0.99\linewidth]{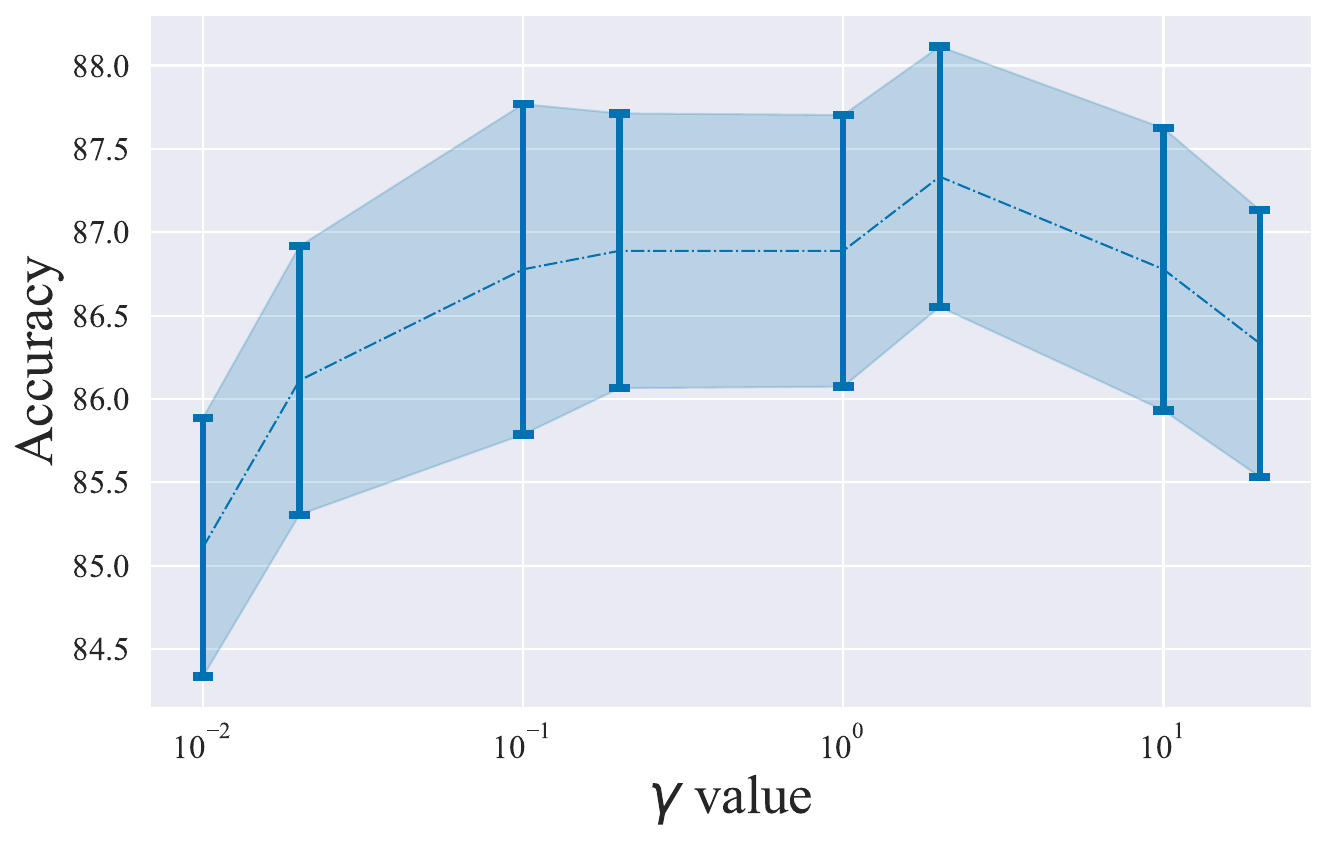}
\captionof{figure}{A hyperparameter sweep for $\gamma$, with \qwen~and 8 in-context examples. The mean and standard error of 25 seeds are plotted. We subsequently have used $\gamma=2$ for all experiments.}
\label{fig:gammasweep}
\end{minipage}
\end{figure}

\section{Comparison of Hard or Soft voting}\label{app:statdiv}

Central to our method is the change from the `Hard Voting' of RNM to the Soft Predictions enabled by our Product of Experts. Subsection~\ref{sec:hard_or_soft} gives an argument that the Kullback-Leibler divergence and $\ell_\infty$ norm are lower for PoE. This section provides more details on this argument. To repeat, a prediction over $K$ classes would be $\vp = [p_1, p_2, \cdots, p_K] \in [0,1]^K$. For hard voting, this would be thresholded to $\vh \in \{0,1\}; \sum \vh = 1$. For soft predictions, this would be clipped to $\vs  = [s_1, s_2, \cdots, s_K] \in [e^{-2},1]^K$.

We compare with the mean predictive likelihood $l_{\text{mean}}(\vq, y) = \vq_{y}$ for approximate vector $\vq$ at label $y$, and the $\ell_\infty$-norm $D_{\infty}(\vp, \vq) = \max_i |p_i - q_i|$. The mean predictive likelihood is important, as it is the probability that the correct will be sampled according to this expert alone; the $\ell_\infty$-norm indicates the largest change in sampling probability between the unclipped and clipped vector. Most predictions from the LLM follow a power-law distribution~\citep{mandelbrot1954structure}. This is empirically verified in Figure~\ref{fig:statdiv}, which shows the mean and standard deviation of the predictive likelihood across 3000 predictions with a Qwen3 model.

As such, we can clarify why the KL divergence and $\ell_\infty$-norm are better for soft predictions. assume w.l.o.g. that the largest predictive likelihood is $p_1$ and the second largest $p_2$, which are, according to Figure~\ref{fig:statdiv} is around \num{0.6} and \num{0.18}. Then the $\ell_\infty$-norm

\begin{equation}
    \ell_{\infty,\text{hard}} =\max_i |p_i - q_i| = \max \{ 1-p_1, p_2 \}\ \ ; \qquad \qquad \ell_{\infty,\text{soft}} \leq e^{-\gamma},
\end{equation}

which, for the power-law like Figure~\ref{fig:statdiv}, $\ell_{\infty,\text{soft}}$ is much smaller than $\ell_{\infty,\text{hard}}$, because $e^{-\gamma} < p_2$. And indeed, experimentally we found that the $\ell_\infty$-norm of soft predictions is around $0.135$, which is close to $\exp[-2]=0.1353$.

\begin{table*}[]
\centering
\setlength{\tabcolsep}{4pt}
\renewcommand{\arraystretch}{1.2}
\caption{Reproducing Table~\ref{tab:bigtable} with a \llama~model. For both $8$ and $25$ context set sizes and across three datasets, our method scores significantly higher accuracy. The DP is set to $\varepsilon=4$, consistent with previous implementations. }
\begin{tabularx}{\textwidth}{lrrrclll}
\toprule
 & $\varepsilon$ & \textbf{Num data} & \textbf{Method} & \textbf{Lbl. Priv.} & \textbf{AGNews} & \textbf{DBPedia} & \textbf{TREC} \\
\midrule
No context & 0 & 0 & Empty & \checkmark & $61.4_{ \pm 0.6}$ & $73.7_{ \pm 0.7}$ & $76.5_{ \pm 0.7}$ \\
Synthetic ~\citep{tang2024dpfewshot} & 4 & \num{30000} & Synth. data & \texttimes & $84.9_{\pm 0.9}$ & $88.0_{\pm 0.8}$ & $62.1_{\pm 1.9}$ \\
\midrule
\qquad \textit{8 context examples} & & & & & & & \\
Privacy-by-Sampling ~\citep{wu2023privacyhistogramllm} & 4 & 8 & Subsampling & \checkmark & $79.5_{ \pm 0.8}$ & $65.0_{ \pm 0.7}$ & $71.5_{ \pm 0.9}$ \\
RNM ~\citep{wu2023privacyhistogramllm} & 4 & 8 & Hard voting & \checkmark & $64.2_{ \pm 0.7}$ & $47.3_{ \pm 0.7}$ & $52.7_{ \pm 0.8}$ \\ \rowcolor{LightCyan}
Product-of-Experts (ours) & 4 & 8 & Soft pred. & \checkmark & $\textbf{85.5}_{ \pm 0.6}$ & $\textbf{90.8}_{ \pm 0.5}$ & $\textbf{78.8}_{ \pm 0.6}$ \\
\qquad \textit{25 context examples} & & & & & & & \\
Privacy-by-Sampling ~\citep{wu2023privacyhistogramllm} & 4 & 25 & Subsampling & \checkmark & $80.8_{ \pm 0.8}$ & $71.8_{ \pm 0.7}$ & $72.9_{ \pm 0.8}$ \\
RNM ~\citep{wu2023privacyhistogramllm} & 4 & 25 & Hard voting & \checkmark & $83.6_{ \pm 0.5}$ & $88.2_{ \pm 0.6}$ & $77.8_{ \pm 0.7}$ \\ \rowcolor{LightCyan}
Product-of-Experts (ours) & 4 & 25 & Soft pred. & \checkmark & $\textbf{85.8}_{ \pm 0.5}$ & $\textbf{91.0}_{ \pm 0.5}$ & $\textbf{80.0}_{ \pm 0.5}$ \\
\midrule
\textcolor{gray}{No-DP} & \textcolor{gray}{$\infty$} & \textcolor{gray}{8} & \textcolor{gray}{In-context} & \textcolor{gray}{-} & $\textcolor{gray}{87.8_{ \pm 0.5}}$ & $\textcolor{gray}{91.6_{ \pm 0.5}}$ & $\textcolor{gray}{79.9_{ \pm 1.1}}$ \\
\textcolor{gray}{No-DP} & \textcolor{gray}{$\infty$} & \textcolor{gray}{25} & \textcolor{gray}{In-context} & \textcolor{gray}{-} & $\textcolor{gray}{88.3_{ \pm 0.7}}$ & $\textcolor{gray}{93.4_{ \pm 0.6}}$ & $\textcolor{gray}{80.3_{ \pm 0.8}}$ \\
\bottomrule
\end{tabularx}
\label{tab:repr_bigtable}
\end{table*}

\section{Extra results}\label{app:extra_results}

The LLM that we use for inference plays a central role in our method. Therefore, we repeat all our experiments using another, independently trained frontier LLM.
The main Table~\ref{tab:bigtable} and Table~\ref{tab:gsm8k_acc} experiments are run with a \qwen~model~\citep{qwen3technicalreport}, available from~\url{huggingface.co/Qwen/Qwen3-4B}. These results are reproduced with a \llama~\citep{llama3}, available from~\url{huggingface.co/meta-llama/Llama-3.1-8B}. Those results are shown in Tables~\ref{tab:repr_bigtable} and~\ref{tab:gsm8k_repr}; all conclusions remain unchanged, and the patterns described in the main text hold for this LLM as well. The vision-language experiment in Table~\ref{tab:vlm_accuracy} is run with a QwenVL2.5 model~\citep{QwenVL}, available from~\url{huggingface.co/Qwen/Qwen2.5-VL-7B-Instruct}. That result is reproduced with an InternVL3.5-2B model~\citep{zhu2025internvl3}. The result is shown in Table~\ref{tab:vlm_internvl}; all conclusions remain unchanged. These additional results were obtained with independently trained models, showing the generalizability of our method.

\textbf{Computational efficiency:} Table~\ref{tab:wallclocktime} reports wallclock times for evaluating $25 \times 100$ random seeds on the AGNews dataset with a single A100 GPU. The privacy-by-sampling method~\citep{wu2023privacyhistogramllm} requires $23.8$ hours for 25 context examples, while the synthetic data approach~\citep{tang2024dpfewshot} takes $6.1$ hours, making both methods computationally prohibitive for practical deployment. In contrast, our Product-of-Experts method achieves wall-clock times similar to RNM ($0.8$ hours for 25 context examples), yet, as shown in Table~\ref{tab:bigtable}, it consistently achieves higher accuracy across all datasets and number of examples. This demonstrates that our approach provides the best privacy-utility-efficiency trade-off among existing differentially private ICL methods.

\begin{figure}[t]
\centering
\begin{minipage}[t]{0.48\textwidth}\centering
\setlength{\tabcolsep}{3pt}
\renewcommand{\arraystretch}{1.3}
\captionof{table}{Reproduction of Table~\ref{tab:gsm8k_acc} with a \llama~model. The baseline accuracy for a model without context is 20\%. Our method, PoE, scores significantly higher accuracy than related work.}
\begin{tabular}{lr r r}
\toprule
& \multicolumn{3}{l}{\textbf{Number of examples}} \\
 & \textbf{4} & \textbf{8} & \textbf{20} \\
\midrule
\textcolor{gray}{Non-private, no DP} & \textcolor{gray}{$37.7_{ \pm 0.7}$} & \textcolor{gray}{$37.6_{ \pm 0.7}$} & \textcolor{gray}{$38.3_{ \pm 0.8}$} \\
\midrule
RNM~\citep{wu2023privacyhistogramllm} & $15.9_{ \pm 0.5}$ & $20.2_{ \pm 0.8}$ & $34.8_{ \pm 0.7}$ \\ \rowcolor{LightCyan}
PoE (Ours) & $\textbf{31.0}_{ \pm 1.0}$ & $\textbf{34.5}_{ \pm 0.8}$ & $\textbf{36.8}_{ \pm 0.9}$ \\
\bottomrule
\end{tabular}
\label{tab:gsm8k_repr}
\end{minipage}
\hfill
\begin{minipage}[t]{0.48\textwidth}
\centering
\captionof{table}{Reproduction of Table~\ref{tab:vlm_accuracy} with an InternVL3.5 model.  The baseline accuracy for a model without context is 14\%.  Our method, PoE, scores significantly higher accuracy than related work.}
\setlength{\tabcolsep}{3pt}
\renewcommand{\arraystretch}{1.3}
\begin{tabular}{lr r r}
\toprule
& \multicolumn{3}{l}{\textbf{Number of examples}} \\
& \textbf{4} & \textbf{8} & \textbf{20} \\
\midrule
\textcolor{gray}{Non-private, no DP} & \textcolor{gray}{$97.1_{\pm 0.1}$} & \textcolor{gray}{$97.6_{\pm 0.1}$} & \textcolor{gray}{$98.2_{\pm 0.1}$} \\
\midrule
RNM~\citep{wu2023privacyhistogramllm} & $27.6_{\pm 0.4}$ & $35.6_{\pm 0.5}$ & $64.9_{\pm 0.5}$ \\ \rowcolor{LightCyan}
PoE (Ours) & $\textbf{67.8}_{\pm 0.4}$ & $\textbf{88.5}_{\pm 0.2}$ & $\textbf{96.5}_{\pm 0.1}$ \\
\bottomrule
\end{tabular}
\label{tab:vlm_internvl}
\end{minipage}
\end{figure}

\section{Details on the experimental settings}\label{app:exp-details-general}

\textbf{Clarification of the random seed generation} Each number that has an accuracy $\pm$standard error is the mean and standard error of 25 randomly rerun experiments. Each experiment evaluates 100 randomly drawn prompts from the test set against a randomly drawn context set. This setting follows the setup of ~\citet{tang2024dpfewshot}. The reason that one does not simply take 2500 random prompts is computational cost. Once the random context set is drawn, preprocessed, and loaded into memory, one can evaluate accuracy across multiple randomly drawn prompts. Because of a different preprocessing library for the Vision-Language model experiments, those experiments report the mean of 2500 randomly sampled context and test sets, along with the Clopper-Pearson confidence interval.

It is important to note that the training images in this context are sampled randomly and are not class-conditional (as was done in previous work, e.g.,~\citet{tang2024dpfewshot}). This random sampling means that sometimes the context will have multiple examples for a class, and sometimes it will have none. For instance, if the classes were A, B, and C. Then, sampling five examples in the context could yield AAABD, indicating that classes C or E lack a demonstration.

\textbf{Hyperparameter settings } Where applicable, the hyperparameters are copied from the previous works that we compare with. Our method introduces only one additional hyperparameter, the clipping bound $\gamma$. We set this to $\gamma = 2$ for all experiments. Figure~\ref{fig:gammasweep} shows a hyperparameter sweep for $\gamma$ on the AGNews dataset with a Qwen3 model. The figure shows the mean and standard error among 25 random seeds. We choose $\gamma=2$, which has the highest mean accuracy. Also, this means that values below $\exp[-2]=13.5$\% are clipped so that the ``uncertainty nuance is preserved'' between \num{13.5} and \num{100}\%.

Similarly, only for the VLM experiments, we `group' shots in sets of 2. This is explained in Appendix~\ref{app:grouping}. For example, for the setting of Table~\ref{tab:vlm_accuracy}, with 8 examplars and our method at $\varepsilon=1.0$, a setting of groupsize 1 would score $50.6_{\pm 1.0}$\% and a groupsize of 4 would score $67.6_{\pm 0.9}$\%, which both have lower accuracy than $77.6_{\pm 0.8}$\%. We hypothesize that VLM predictions are much better across multiple shots than with a single shot. As a thought experiment, when given two images and asked which a third image is most similar to, similarity could be based on foreground/background/color/texture, etc. Whenever two examples each are provided, the VLM can recognise the mode of similarity. As a counterexperiment, we ran AGNews with a group size of 2, but this led to lower accuracy, likely due to the poorer signal-to-noise ratio.

\textbf{Details on the privacy-by-sampling method } The experiments for Table~\ref{tab:bigtable}  compare with the privacy-by-sampling method of ~\citet{wu2023privacyhistogramllm}. Due to computational cost, we use 100 random subsamples, with each sample included with independent probability \num{0.5}. The corresponding privacy accounting is done with the PRV accountant in Opacus~\citep{opacus}.

\textbf{Pretraining and privacy } All LLMs used in this study were pretrained on web-scale data. In some cases, it is difficult to determine if the models were pretrained on the datasets we use for evaluation. Even when some datasets are explicitly filtered out by the original authors of those models and pipelines, snippets of data can still appear in various forms on the web. As such, we design the tasks in this paper such that learning from the actual context is instrumental to achieving high accuracy. In all experiments, we report 0-shot accuracy. In all experiments, the scores achieved by any in-context method are significantly higher than 0-shot performance, providing evidence of in-context learning.

The in-context learning must be inherent to the tasks. Even if a text snippet was inadvertently included in the pretraining data, for example, the LLM still has to learn the class label as prescribed by the context. This is especially so for the Vision-Language task. The pseudo-names are chosen such that it should be learnt from the context which image goes with which pseudo-label~\citep{ICL_VLM_pseudo-name,derakhshani2023self_secat}. The results for those experiments are averaged among 2500 random seeds, where, for each seed, a different image class is drawn and assigned a different pseudo-name.

For some use cases, privacy remains important in the threat model (c.f., Subsection~\ref{sec:scenarios}) even though the data may have been used for pretraining. For example, an image of a particular piece of furniture on the web could be part of the pretraining. However, the fact that the furniture is in someone's living room is private information that should not be leaked by a cleaning robot going from house to house. As a second example, a particular poem might be included in the pretraining. However, the fact that the poem is discussed with a friend via email is private information that an LLM email agent should not leak.

\begin{table}[t]
\centering
\setlength{\tabcolsep}{3pt}
\renewcommand{\arraystretch}{1.3}
\caption{Template prompts for our tasks. The curly brackets, $\{\}$, are replaced with the actual data, and the template is repeated for each expert. The prompt templates for AGNews, DBPedia, and TREC equal those in~\citet{zhao2021calibrate,tang2024dpfewshot}.}
\begin{tabularx}{\linewidth}{lXX}
\toprule
\textbf{Dataset} & \textbf{Template} & \textbf{Labels} \\
\midrule
AGNews & Classify the news articles. Article: \{text\} Answer: \{label\} & World, Sports, Business, Technology \\
DBPedia & Classify the documents based on what they are about. Article: \{text\} Answer type: \{label\} & Number, Location, Person, Description, Entity, Abbreviation \\
TREC & Classify the questions based on their answer type. Question: \{text\} Answer Type: \{label\} & Number, Location, Person, Description, Entity, Abbreviation \\
GSM8k & Answer the grade school math problem with a number. Question: \{text\} Answer: \{label\} & 0, 1, 2, 3, 4, 5, 6, 7, 8, 9 \\
VLM & You are a helpful assistant. Your task is to classify images into one of these categories: \{labels\}. Based on the support examples provided, respond with the single-word label that best describes the query image. \{image\},\{label\} & Dax, Blicket, Perpo, Slation, Shously \\
\bottomrule
\end{tabularx}
\label{tab:prompttemplate}
\end{table}
\subsection{Details on the Vision Language Model eval}\label{app:exp-details-vlm}

\begin{figure}[t!]
    \centering
    \includegraphics[width=\linewidth]{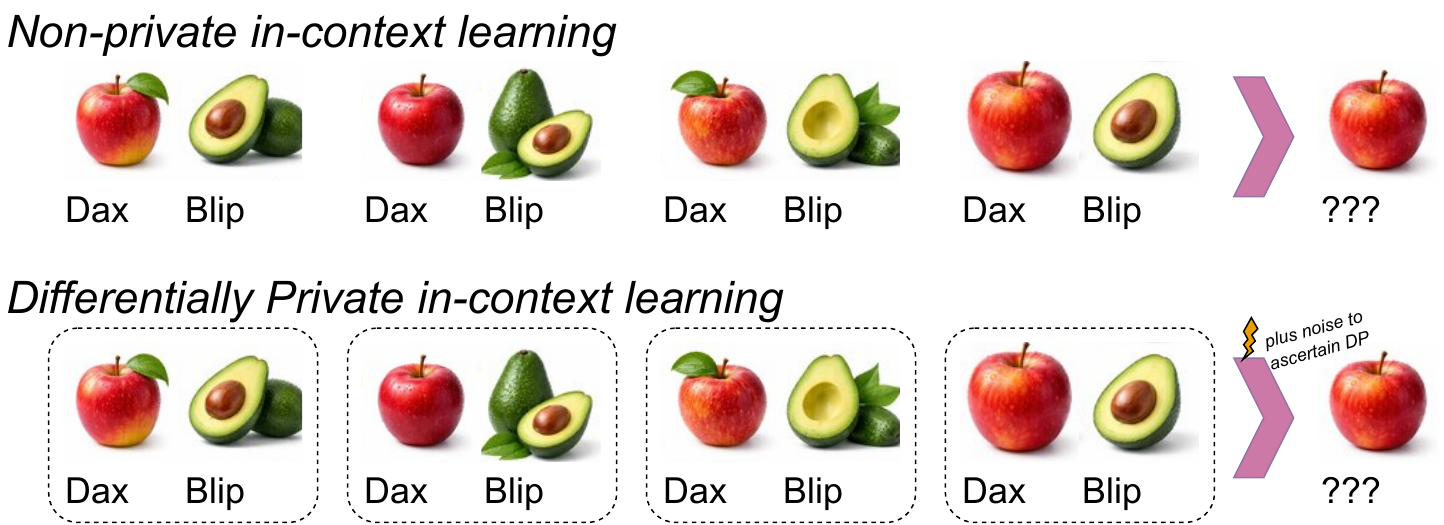}
    \caption{Evaluation setting for in-context learning with a Vision-Language Model (VLM). The names of the classes are deliberately chosen to be nonsensical words, forcing the VLM to learn from context rather than rely on knowledge from pretraining.}
    \label{fig:vlm_eval_setup}
\end{figure}

We visualize the experimental setup for the VLM eval in Figure~\ref{fig:vlm_eval_setup}.
The pseudo-name labelling tasks require the VLM to learn to label the object's name in the image. This is a challenging task for the VLM, since it must learn to reason about the object and its relationships to other objects in the image. The pseudo-names are five English-sounding but non-real words following previous work~\citep{ICL_VLM_pseudo-name}. This choice prevents the VLM from relying on its pretraining knowledge to label objects. The names are ``perpo'', ``blicket'', ``dax'', ``slation'', and ``shously''. The prompt for this task and other tasks are outlined in Table~\ref{tab:prompttemplate}.

Images for this task are filled from the ImageNet dataset~\citep{deng2009imagenet}. The evaluation relies on 2500 random seeds. That means that for every random seed, a randomly chosen ImageNet class is associated with each of the five pseudo-names. The number of context examples varies and is mentioned in each table. We group the shots into pairs using the grouping described in Section~\ref{app:grouping}. This does not affect the privacy guarantee as explained in that section, but it significantly improves the results. We hypothesize that this is because, given two images per class, the VLM can reason more effectively about the foreground versus the background.
It is important to note that, for the VLM and other tasks using our method, the training images are sampled randomly rather than conditionally on class. This is explained in subsection~\ref{app:exp-details-general}.

\begin{table}[t!]
    \centering
    \captionof{table}{Runtimes for the three methods. This is the wall-clock time for $25 \times 100$ random seeds on a single A100.}
    \setlength{\tabcolsep}{3pt}
    \renewcommand{\arraystretch}{1.3}
    \begin{tabular}{l c c c}
    \toprule
    & Privacy-by-sampling & Synthetic data as bottleneck & RNM / PoE\\
    & \citep{wu2023privacyhistogramllm} & \citep{tang2024dpfewshot} & Wu et al. / ours \\
    \midrule
25 context experiments & $23.8$ & $6.1$  & $0.8$  \\
8 context experiments & $8.3$  & $6.2$   & $0.4$   \\
    \bottomrule
    \end{tabular}
    \label{tab:wallclocktime}
\end{table}

\subsection{Details on the Membership Inference Attack}\label{app:exp-details-mia}

The MIA serves to highlight the empirical privacy vulnerability of in-context learning and to establish the effect of the DP guarantee. We use a Likelihood Ratio attack, which is common in the literature~\citep{shokri2017membership,carlini_ref_model_MIA,duan2024MIAonICL}. In this case, the likelihood is the predictive score that the LLM assigns to the classification token. Note that this is a simulated version of in-context learning, where only the class prediction is output, not its logits. Although label-only MIAs have been explored, we demonstrate the vulnerability of the likelihood-ratio variant, as this is a precursor to label-only attacks~\citep{he2025labelonlyMIA}. We do not use a reference model, since we assume the task is novel to the LLM and thus the reference would be uniform. This setting and assumptions follow the situation in~\citet{duan2024MIAonICL}.

The attack scenario is as follows: the context is loaded with data from $N$ private data sources; the attacker can input any query and will receive a predicted class and its logits. The attacker uses this logit to decide if the query is part of the private data sources. The attacker is successful if the query is part of the private data sources.

The success metric is the Area Under the Receiver Operating Characteristic Curve (AUROC) as a function of the threshold $\tau$ that separates positive and negative samples. This AUROC is based on the True Positive Rate (TPR) and the False Positive Rate (FPR):

\begin{itemize}[leftmargin=1em]
    \item A True Positive (TP) is a query that is part of the private data sources and is correctly classified as such.
    \item A False Positive (FP) is a query that is \textbf{not} part of the private data sources, but is incorrectly classified as such.
\end{itemize}

We run the MIA on three text classification datasets to demonstrate generalizability across datasets. Figure~\ref{fig:mia_auroc} shows the AUROC for the four text classification datasets. The results were obtained with 50 random seeds and 20 in-context examples each. This means the AUROC is determined from member data across 1000 attacks. The model indicated as DP was used with $\varepsilon = 1$, and our method, described in Algorithm \ref{alg:dp_icl_ci}, was used.

Across the three figures, it can be seen that our method significantly improves AUROC, bringing it closer to \num{0.5}, the level of random guessing. For example, on the AGNews dataset, our method brings the AUROC from \num{60.0} to \num{52.9}, and on the TREC dataset, it brings the AUROC from \num{63.6} to \num{53.8}. This shows that a) in-context learning is vulnerable to membership inference attacks and b) our method is effective at reducing this vulnerability.

\begin{figure*}[htbp]
    \centering
    \includegraphics[width=\textwidth]{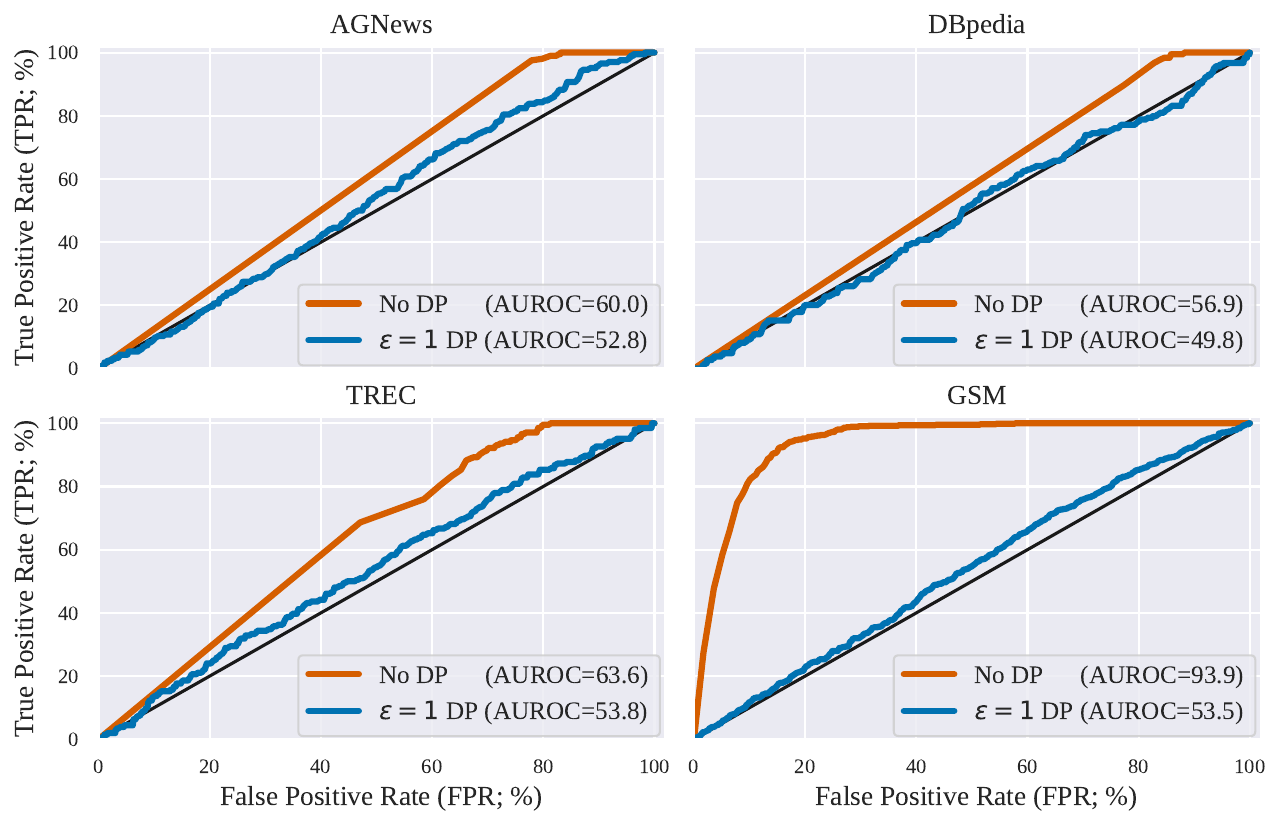}
    \caption{Membership Inference Attack (MIA) on the AGNews, DBPedia, TREC, and GSM8k datasets. In each MIA, the ROC curve for our DP method approaches the random chance line, with an Area Under the ROC curve of closer to 50, indicating lower empirical vulnerability to the privacy attack. The difference is most visible in the GSM8k dataset, where longer contexts are used.}
    \label{fig:mia_auroc}
\end{figure*}

\end{document}

%% file: math_commands.tex
\usepackage{amsmath,amsfonts,bm}
\usepackage{cancel}

\def\eqref#1{eq.~\ref{#1}}

\def\1{\bm{1}}

\def\vh{{\bm{h}}}

\def\vl{{\bm{l}}}
\def\vm{{\bm{m}}}

\def\vp{{\bm{p}}}
\def\vq{{\bm{q}}}

\def\vs{{\bm{s}}}

\def\vx{{\bm{x}}}
\def\vy{{\bm{y}}}
\def\vz{{\bm{z}}}

\def\mC{{\bm{C}}}
\def\mD{{\bm{D}}}

\def\mS{{\bm{S}}}

\DeclareMathAlphabet{\mathsfit}{\encodingdefault}{\sfdefault}{m}{sl}
\SetMathAlphabet{\mathsfit}{bold}{\encodingdefault}{\sfdefault}{bx}{n}

\newcommand*{\prob}[1]{\mathbb{P}}

\newcommand{\lTwo}{\ell_2}

\DeclareMathOperator*{\argmax}{arg\,max}

\usepackage{centernot}
